\numberwithin{equation}{section}  %
\newcommand*{\defeq}{\coloneqq}  %
\newcommand*{\rdefeq}{\eqqcolon}  %
\newcommand{\be}{\begin{equation}}
\newcommand{\ee}{\end{equation}}
\providecommand{\where}{}
\providecommand{\suchthat}{}
\DeclarePairedDelimiterX{\set}[1]\{\}{%
\renewcommand*{\where}{\colon}
\renewcommand*{\suchthat}{%
  \nonscript\:\delimsize\vert%
  \allowbreak%
  \nonscript\:%
  \mathopen{}%
}%
\newcommand*{\setsym}[1]{\ensuremath{\mathbb{#1}}}
\newcommand*{\N}{\setsym{N}}
\newcommand*{\R}{\setsym{R}}
\newcommand*{\C}{\setsym{C}}
\newcommand*{\maps}[2]{#2^{#1}}  %
\newcommand*{\inv}{^{-1}}
\newcommand*{\spacesym}[1]{{\mathbb{#1}}}  %
\newcommand*{\closure}[1]{\overline{#1}}
\renewcommand*{\vec}[1]{{\bm{#1}}}
\newcommand*{\mat}[1]{{\bm{#1}}}
\NewDocumentCommand{\range}{s m}{%
  \operatorname{ran}
  \IfBooleanTF{#1}{\left(}{(}
  #2
  \IfBooleanTF{#1}{\right)}{)}
}
\NewDocumentCommand{\kernel}{s m}{%
  \operatorname{ker}
  \IfBooleanTF{#1}{\left(}{(}
  #2
  \IfBooleanTF{#1}{\right)}{)}
}
\newcommand*{\adualsp}[1]{#1^\#}
\newcommand*{\dualsp}[1]{#1'}
\newcommand*{\banachsp}[1][B]{\spacesym{#1}}  %
\DeclarePairedDelimiterXPP{\@norm}[2]{}{\lVert}{\rVert}{\ifstrempty{#2}{}{_{#2}}}{#1}
\NewDocumentCommand{\norm}{s O{} m O{}}{%
  \IfBooleanTF{#1}{%
    \@norm*{#3}{#4}%
  }{%
    \@norm[#2]{#3}{#4}%
  }%
}
\DeclarePairedDelimiter{\abs}{\lvert}{\rvert}
\newcommand*{\dualop}{'}
\newcommand*{\hilbertsp}[1][H]{\spacesym{#1}}  %
\DeclarePairedDelimiterXPP{\@inprod}[3]{}{\langle}{\rangle}{\ifstrempty{#3}{}{_{#3}}}{#1, #2}
\NewDocumentCommand{\inprod}{s O{} m m O{}}{%
  \IfBooleanTF{#1}{%
    \@inprod*{#3}{#4}{#5}%
  }{%
    \@inprod[#2]{#3}{#4}{#5}%
  }%
}
\newcommand*{\T}{^\top}
\newcommand*{\adj}{^*}
\newcommand*{\pinv}{^\dagger}
\NewDocumentCommand{\cfns}{m o}{%
  C (#1\IfValueT{#2}{, #2})
}
\NewDocumentCommand{\cdfns}{O{0} m o}{%
  C^{#1} (#2\IfValueT{#3}{, #3})
}
\NewDocumentCommand{\holdersp}{m o m}{%
  C^{#1\IfValueT{#2}{, #2}}(\closure{#3})
}
\RenewDocumentCommand{\L}{m o}{%
  \ensuremath{
    L_{#1}
    \IfNoValueF{#2}{\left( #2 \right)}
  }
}
\NewDocumentCommand{\sobolev}{o m o}{%
  \IfNoValueTF{#1}{
    H^{#2}
  }{
    W^{#1,#2}
  }
  \IfNoValueF{#3}{\left( #3 \right)}
}
\NewDocumentCommand{\sobolevtest}{o m o}{%
  \sobolev[#1]{#2}_0
  \IfNoValueF{#3}{\left( #3 \right)}
}
\newcommand*{\linop}[1]{\mathcal{#1}}
\newcommand*{\linfctls}[1]{{\bm{\linop{#1}}}}
\DeclarePairedDelimiter{\@evallinop@oparg}{(}{)}
\DeclarePairedDelimiter{\@evallinop@fnarg}{(}{)}
\NewDocumentCommand{\evallinop}{m s O{} m s O{} d()}{%
  #1%
  \IfBooleanTF{#2}{%
    \@evallinop@oparg*{#4}%
  }{%
    \@evallinop@oparg[#3]{#4}%
  }%
  \IfValueT{#7}{%
    \IfBooleanTF{#5}{%
      \@evallinop@fnarg*{#7}%
    }{%
      \@evallinop@fnarg[#6]{#7}%
    }%
  }%
}
\NewDocumentCommand{\linopat}{m s O{} m d()}{%
  \IfBooleanTF{#2}{%
    \evallinop{\linop{#1}}*{#4}(#5)%
  }{%
    \evallinop{\linop{#1}}[#3]{#4}(#5)%
  }%
}
\NewDocumentCommand{\linfctlsat}{m s O{} m d()}{%
  \IfBooleanTF{#2}{%
    \evallinop{\linfctls{#1}}*{#4}(#5)%
  }{%
    \evallinop{\linfctls{#1}}[#3]{#4}(#5)%
  }%
}
\newcommand*{\diff}[2][1]{%
  \mathrm{d} #2\ifstrequal{#1}{1}{}{^{#1}}%
}
\newcommand*{\pdiff}[2][1]{%
  \partial #2\ifstrequal{#1}{1}{}{^{#1}}%
}
\NewDocumentCommand{\deriv}{s O{1} m m}{%
  \frac{%
    \mathrm{d}\ifstrequal{#2}{1}{}{^{#2}}\IfBooleanT{#1}{#4}
  }{%
    \diff[#2]{#3}
  }
  \IfBooleanF{#1}{#4}
}
\NewDocumentCommand{\derivat}{s O{1} m m m}{%
  \left.
    \IfBooleanTF{#1}{%
      \deriv*[#2]{#3}{#4}
    }{%
      \deriv[#2]{#3}{#4}
    }
  \right|_{#3 = #5}
}
\NewDocumentCommand{\dderiv}{m m}{%
  \partial_{#1} #2
}
\NewDocumentCommand{\dderivat}{m m o m}{%
  \IfNoValueTF{#3}{%
    \dderiv{#1}{#2} \left( #4 \right)
  }{%
    \left.
    \dderiv{#1}{#2}
    \right_{#3 = #4}
  }
}
\NewDocumentCommand{\pderiv}{s O{1} m m}{%
  \frac{%
    \partial\ifstrequal{#2}{1}{}{^{#2}}\IfBooleanT{#1}{#4}
  }{%
    \pdiff[#2]{#3}
  }
  \IfBooleanF{#1}{#4}
}
\NewDocumentCommand{\pderivat}{s O{1} m m o m}{%
  \left.
    \IfBooleanTF{#1}{%
      \pderiv*[#2]{#3}{#4}
    }{%
      \pderiv[#2]{#3}{#4}
    }
  \right|_{\IfNoValueTF{#5}{#3}{#5} = #6}
}
\NewDocumentCommand{\mpderiv}{s O{1} m m}{%
  \frac{%
    \partial\ifstrequal{#2}{1}{}{^{#2}}\IfBooleanT{#1}{#4}
  }{%
    #3
  }
  \IfBooleanF{#1}{#4}
}
\NewDocumentCommand{\mpderivat}{s O{1} m m m m}{%
  \left.
    \IfBooleanTF{#1}{%
      \mpderiv*[#2]{#3}{#4}
    }{%
      \mpderiv[#2]{#3}{#4}
    }
  \right|_{#5 = #6}
}
\NewDocumentCommand{\mipderiv}{s m o m o}{%
  \IfNoValueTF{#3}{
    \mathrm{D}^{#2}
    \IfBooleanTF{#1}{\left[ #4 \right]}{#4}
    \IfNoValueF{#5}{\left( #5 \right)}
  }{
    \IfNoValueF{#5}{\left.}
    \frac{%
      \partial^{\lvert #2 \rvert}\IfBooleanT{#1}{#4}
    }{%
      \pdiff[#2]{#3}
    }
    \IfBooleanF{#1}{#4}
    \IfNoValueF{#5}{\right\vert_{#3 = #5}}
  }
}
\NewDocumentCommand{\jacobian}{o m o}{%
  \IfNoValueTF{#1}{%
    \mathrm{D} #2 \IfNoValueF{#3}{\left( #3 \right)}
  }{%
    \IfValueT{#3}{\left.}
    \mathrm{D}_{#1} #2
    \IfValueT{#3}{\right|_{ #3}}
  }
}
\NewDocumentCommand{\gateauxat}{m m}{%
  \delta #1 \left( #2 \right)%
}
\NewDocumentCommand{\gradient}{o m o}{%
  \IfNoValueTF{#1}{%
    \nabla #2 \IfNoValueF{#3}{\left( #3 \right)}
  }{%
    \left.
      \nabla #2
    \right|_{#1\IfNoValueF{#3}{= #3}}
}
}
\NewDocumentCommand{\divergence}{m o}{%
  \operatorname{div} \left( #1 \right) \IfNoValueF{#2}{\left( #2 \right)}
}
\NewDocumentCommand{\hessian}{o m o}{%
  \IfNoValueTF{#1}{%
    \mathrm{H} #2 \IfNoValueF{#3}{\left( #3 \right)}
  }{%
    \IfValueT{#3}{\left.}
    \mathrm{H}_{#1} #2
    \IfValueT{#3}{\right|_{#3}}
  }
}
\NewDocumentCommand{\laplaceop}{o m o}{%
  \IfNoValueTF{#1}{%
    \Delta #2 \IfNoValueF{#3}{\left( #3 \right)}
  }{%
    \left.
      \Delta #2
    \right|_{#1\IfNoValueF{#3}{= #3}}
}
}
\NewDocumentCommand{\rintegral}{o o m m}{%
  \int\IfNoValueF{#1}{_{#1}}\IfNoValueF{#2}{^{#2}} #4 \,\diff[1]{#3}%
}
\newcommand*{\sigalg}{\mathcal{A}}
\newcommand*{\borelsigalg}[1]{\mathcal{B} \left( #1 \right)}
\newcommand*{\@probsymbol}{\mathrm{P}}
\newcommand*{\@given}[1]{%
  \nonscript\:#1\vert
  \allowbreak
  \nonscript\:
  \mathopen{}}
\providecommand*{\given}{}
\DeclarePairedDelimiterXPP{\@prob}[1]{\@probsymbol}{(}{)}{}{%
  \renewcommand*{\given}{\@given{\delimsize}}%
  #1}
\newcommand*{\prob}[1]{\ifblank{#1}{\@probsymbol}{\@prob*{#1}}}
\newcommand*{\rvar}[1]{{\mathrm{#1}}}
\newcommand*{\rvec}[1]{{\bm{\mathrm{#1}}}}
\DeclarePairedDelimiterX{\@condrv}[1]{.}{.}{%
  \renewcommand*{\given}{\@given{\delimsize}}%
  #1}
\NewDocumentCommand{\condrv}{som}{%
  \IfBooleanTF{#1}{%
    \@condrv*{#3}
  }{%
    \IfNoValueTF{#2}{%
      \begingroup%
      \renewcommand*{\given}{\@given{}}%
      #3%
      \endgroup%
    }{%
      \@condrv[#2]{#3}%
    }
  }
}
\NewDocumentCommand{\expectation}{o o m}{%
  \operatorname{\mathbb{E}}\IfNoValueF{#1}{_{#1\IfNoValueF{#2}{\sim #2}}} \left[ #3 \right]
}
\NewDocumentCommand{\covariance}{o o m m}{%
  \operatorname{Cov}\IfNoValueF{#1}{_{#1\IfNoValueF{#2}{\sim #2}}} \left[ #3, #4 \right]
}
\NewDocumentCommand{\variance}{o o m}{%
  \operatorname{\mathbb{V}}\IfNoValueF{#1}{_{#1\IfNoValueF{#2}{\sim #2}}} \left[ #3 \right]
}
\newcommand*{\gaussian}[2]{{\ensuremath{\operatorname{\mathcal{N}}\left(#1, #2\right)}}}
\newcommand*{\gaussianpdf}[3]{%
  {\ensuremath{\operatorname{\mathcal{N}}\left(#1; #2, #3\right)}}%
}
\newcommand*{\gp}[2]{{\ensuremath{\operatorname{\mathcal{GP}}}\left(#1, #2\right)}}
\NewDocumentCommand{\LkL}{m m o}{%
  #1 #2 \IfValueTF{#3}{#3}{#1}\dualop
}
\declaretheorem[style=remark,numberwithin=section]{example}
\theoremstyle{plain}
\newtheorem{theorem}{Theorem}[section]
\newtheorem{proposition}[theorem]{Proposition}
\newtheorem{lemma}[theorem]{Lemma}
\newtheorem{corollary}[theorem]{Corollary}
\theoremstyle{definition}
\newtheorem{definition}[theorem]{Definition}
\newtheorem{assumption}[theorem]{Assumption}
\theoremstyle{remark}
\newtheorem{remark}[theorem]{Remark}
\definecolor{TUred}{RGB}{165,30,55}
\definecolor{TUgold}{RGB}{180,160,105}
\definecolor{TUdark}{RGB}{50,65,75}
\definecolor{TUgray}{RGB}{175,179,183}
\definecolor{TUdarkblue}{RGB}{65,90,140}
\definecolor{TUblue}{RGB}{0,105,170}
\definecolor{TUlightblue}{RGB}{80,170,200}
\definecolor{TUlightgreen}{RGB}{130,185,160}
\definecolor{TUgreen}{RGB}{125,165,75}
\definecolor{TUdarkgreen}{RGB}{50,110,30}
\definecolor{TUocre}{RGB}{200,80,60}
\definecolor{TUviolet}{RGB}{175,110,150}
\definecolor{TUmauve}{RGB}{180,160,150}
\definecolor{TUbeige}{RGB}{215,180,105}
\definecolor{TUorange}{RGB}{210,150,0}
\definecolor{TUbrown}{RGB}{145,105,70}
\newcommand*{\figlegendline}[1]{(\protect\tikz[baseline]{\protect\draw[#1, very thick] (0,.6ex) -- (1em,.6ex);})}
\newcommand*{\figlegendshaded}[1]{(\protect\tikz[baseline]{\protect\fill[#1, very thick, opacity=0.4] (0,-.1em) rectangle (1em,1.3ex);})}
\newcommand*{\nosym}{\vec{F}}  %
\newcommand*{\noinsp}{\spacesym{A}}  %
\newcommand*{\noinfn}{\vec{a}}  %
\newcommand*{\noinspdom}{\spacesym{D}_\noinsp}  %
\newcommand*{\noinspdomdim}{d_\noinsp}  %
\newcommand*{\noinspcodomdim}{d_\noinsp'}  %
\newcommand*{\nooutsp}{\spacesym{U}}  %
\newcommand*{\nooutfn}{\vec{u}}  %
\newcommand*{\nooutpt}{\vec{x}}  %
\newcommand*{\nooutspdom}{\spacesym{D}_\nooutsp}  %
\newcommand*{\nooutspdomdim}{d_\nooutsp}  %
\newcommand*{\nooutspcodom}{\R^{\nooutspcodomdim}}  %
\newcommand*{\nooutspcodomdim}{d_\nooutsp'}  %
\newcommand*{\noparamsp}{\spacesym{W}}  %
\newcommand*{\noparam}{\vec{w}}  %
\newcommand*{\noparamdim}{p}  %
\newcommand*{\nohiddenstate}{\vec{v}}  %
\newcommand*{\nohiddenstatecodomdim}{d_\nohiddenstate'}  %
\newcommand*{\nolifting}{\vec{p}}  %
\newcommand*{\noprojection}{\vec{q}}  %
\newcommand*{\fourier}{\linop{F}}  %
\DeclareMathOperator{\rfft}{rfft}  %
\newcommand*{\fnoR}{\mat{R}}  %
\newcommand*{\fnoW}{\mat{W}}  %
\newcommand*{\gpsym}{\rvar{f}}  %
\newcommand*{\gpidx}{a}  %
\newcommand*{\gpidcs}{\noinsp}  %
\newcommand*{\gpmean}{m}  %
\newcommand*{\gpcov}{k}  %
\newcommand*{\mogpsym}{\rvec{f}}  %
\newcommand*{\mogpidx}{a}  %
\newcommand*{\mogpidcs}{\noinsp}  %
\newcommand*{\mogpmean}{\vec{m}}  %
\newcommand*{\mogpcov}{\mat{K}}  %
\newcommand*{\vvgp}{\rvar{F}}  %
\newcommand*{\vvgpidx}{a}  %
\newcommand*{\vvgpidcs}{\noinsp}  %
\newcommand*{\vvgpout}{u}  %
\newcommand*{\vvgpoutsp}{\nooutsp}  %
\newcommand*{\vvgpfctls}{\setsym{L}}  %
\newcommand*{\vvgpsdfctls}{\hat{\setsym{L}}}  %
\newcommand*{\vvgpmean}{\mathcal{M}}  %
\newcommand*{\vvgpcov}{\linop{K}}  %
\newcommand*{\fvgpsym}{\vvgp}  %
\newcommand*{\fvgpidx}{\vvgpidx}
\newcommand*{\fvgpidcs}{\vvgpidcs}
\newcommand*{\fvgpoutsp}{\vvgpoutsp}
\newcommand*{\fvgpoutspdom}{\nooutspdom}
\newcommand*{\fvgpoutspdomel}{x}
\newcommand*{\fvgpmean}{\vvgpmean}
\newcommand*{\fvgpcov}{\vvgpcov}
\newcommand*{\vvfvgp}{\boldsymbol{\mathrm{F}}}  %
\newcommand*{\vvfvgpidx}{\vec{a}}  %
\newcommand*{\vvfvgpidcs}{\fvgpidcs}  %
\newcommand*{\vvfvgpout}{\vec{u}}  %
\newcommand*{\vvfvgpoutsp}{\fvgpoutsp}  %
\newcommand*{\vvfvgpoutspdom}{\fvgpoutspdom}  %
\newcommand*{\vvfvgpoutspdomel}{\vec{x}}  %
\newcommand*{\vvfvgpoutspcodom}{\R^{\vvfvgpoutspcodomdim}}  %
\newcommand*{\vvfvgpoutspcodomdim}{d'}  %
\newcommand*{\nscl}{n_{\operatorname{scl}}}
\newcommand*{\dnn}{\vec{f}}  %
\newcommand*{\dnnindim}{d}  %
\newcommand*{\dnnoutdim}{d'}  %
\newcommand*{\dnnparamdim}{p}  %
\newcommand*{\dnninput}{\vec{x}}  %
\newcommand*{\dnnparam}{\vec{w}}  %
\newcommand*{\ds}{\mathcal{D}}  %
\newcommand*{\dsinput}[1]{\dnninput^{(#1)}}  %
\newcommand*{\dstarget}[1]{\vec{y}^{(#1)}}  %
\newcommand*{\dnnrisk}{R}  %
\newcommand*{\dnnparammap}{\dnnparam^\star}  %
\newcommand*{\dnnlin}{\dnn^\text{lin}_{\dnnparammap}}  %
\newcommand*{\dnnrisklin}{\dnnrisk^\text{lin}_{\dnnparammap}}  %
\newcommand*{\ggn}{\mat{G}}  %
\newcommand*{\llaprec}{\mat{P}}  %
\newcommand*{\llacov}{\llaprec\pinv}  %
\newcommand*{\llaparam}{\rvec{w}}  %
\newcommand*{\llagp}{\mogpsym}  %
\newcommand*{\nolaparammean}{\vec{\mu}}
\newcommand*{\nolaparamcov}{\mat{\Sigma}}
\newcommand*{\noladnnlin}{\dnn^\text{lin}_\nolaparammean}
\newcommand*{\nolaoutgp}{\rvec{u}}
\newcommand*{\nola}{\textsc{LUNO}\xspace}
\newcommand*{\sdnn}{f}  %
\newcommand*{\sdnnlin}{\sdnn^\text{lin}_\nolaparammean}  %
\newcommand*{\vvno}{F}  %
\DeclareMathOperator{\Real}{Re}
\DeclareMathOperator{\Imag}{Im}
\definecolor{TUred}{RGB}{165,30,55}
\definecolor{TUgold}{RGB}{180,160,105}
\definecolor{TUdark}{RGB}{50,65,75}
\definecolor{TUgray}{RGB}{175,179,183}
\definecolor{TUdarkblue}{RGB}{65,90,140}
\definecolor{TUblue}{RGB}{0,105,170}
\definecolor{TUlightblue}{RGB}{80,170,200}
\definecolor{TUlightgreen}{RGB}{130,185,160}
\definecolor{TUgreen}{RGB}{125,165,75}
\definecolor{TUdarkgreen}{RGB}{50,110,30}
\definecolor{TUocre}{RGB}{200,80,60}
\definecolor{TUviolet}{RGB}{175,110,150}
\definecolor{TUmauve}{RGB}{180,160,150}
\definecolor{TUbeige}{RGB}{215,180,105}
\definecolor{TUorange}{RGB}{210,150,0}
\definecolor{TUbrown}{RGB}{145,105,70}
  \crefname{assumption}{Assumption}{Assumptions}
  \crefname{assumption}{assumption}{assumptions}
\icmltitlerunning{Linearization Turns Neural Operators into Function-Valued Gaussian Processes}
\begin{document}

\twocolumn[
\icmltitle{Linearization Turns Neural Operators into Function-Valued Gaussian Processes}

\icmlsetsymbol{equal}{*}

\begin{icmlauthorlist}
\icmlauthor{Emilia Magnani}{equal,tuebi}
\icmlauthor{Marvin Pf\"ortner}{equal,tuebi}
\icmlauthor{Tobias Weber}{equal,tuebi}
\icmlauthor{Philipp Hennig}{tuebi}
\end{icmlauthorlist}

\icmlaffiliation{tuebi}{Tübingen AI Center, University of Tübingen, Tübingen, Germany}

\icmlcorrespondingauthor{Emilia Magnani}{emilia.magnani@uni-tuebingen.de}
\icmlkeywords{Machine Learning, ICML}

\vskip 0.3in
]

\printAffiliationsAndNotice{\icmlEqualContribution}

\begin{abstract}
  Neural operators generalize neural networks to learn mappings between function spaces from data.
They are commonly used to learn solution operators of parametric partial differential equations (PDEs) or propagators of time-dependent PDEs.
However, to make them useful in high-stakes simulation scenarios, their inherent predictive error must be quantified reliably.
We introduce \nola, a novel framework for approximate Bayesian uncertainty quantification in trained neural operators.
Our approach leverages model linearization to push (Gaussian) weight-space uncertainty forward to the neural operator's predictions.
We show that this can be interpreted as a probabilistic version of the concept of currying from functional programming, yielding a function-valued (Gaussian) random process belief.
Our framework provides a practical yet theoretically sound way to apply existing Bayesian deep learning methods such as the linearized Laplace approximation to neural operators.
Just as the underlying neural operator, our approach is resolution-agnostic by design.
The method adds minimal prediction overhead, can be applied post-hoc without retraining the network, and scales to large models and datasets.
We evaluate these aspects in a case study on Fourier neural operators.

\end{abstract}

\section{Introduction}
\label{sec:introduction}
\begin{figure*}[t]
    \centering
    \input{figures/visual-abstract}
    \caption{%
        Illustration of the steps involved in \nola.
        A trained neural operator $\nosym$ (\textbf{top left}) is converted into an equivalent neural network $\dnn$ with outputs in $\nooutspcodom$ using (reverse) currying (\textbf{top right}).
        Linearizing $\dnn$ around the mean of the Gaussian weight belief results in a Gaussian process posterior $\mogpsym$ quantifying the uncertainty about the function learned by $\dnn$ (\textbf{bottom right}).
        Finally, probabilistic currying transforms $\mogpsym$ into a function-valued Gaussian process posterior $\vvfvgp$ over the operator learned by the neural operator $\nosym$ (\textbf{bottom left}).
    }
    \label{fig:visual-abstract}
\end{figure*}

Scientific computing increasingly demands efficient representations of complex non-linear maps between functions.
Examples include solution operators of parametric partial differential equations (PDEs) or the propagators of time-dependent PDEs.
Operator learning is an approach to this problem that generalizes regression algorithms from finite-dimensional to infinite-dimensional function-space input-output pairs \citep{boulle2023mathematical}.
Neural operators, including Fourier neural operators, have emerged as a powerful class of models for operator learning, particularly for the solution operators of PDEs \citep{Kovachki2023NeuralOperator}.
They have been applied successfully across domains including weather forecasting \citep{pathak2022fourcastnet,Bonev2023SphericalFourierNeural}, fluid dynamics \citep{grady2022towards,renn2023forecasting,LI2022100389}, and automotive aerodynamics \citep{li2024geometry}.
Instead of learning to solve a specific PDE, these models learn the operator that maps a functional parameter of the PDE (such as initial values, boundary conditions, force fields, or material parameters) to the corresponding solution.
This approach \emph{amortizes} computational cost by learning to solve entire families of PDEs.

Although neural operators have demonstrated strong predictive capabilities, they are unable to quantify the inherent uncertainty in their predictions.
Predictive uncertainty quantification is indispensable for many downstream tasks, such as decision-making in safety-critical scenarios.
For example, a neural operator trained on past climate data should increase predictive uncertainty under distribution shifts due to climate change, reflecting potential losses in accuracy.
Furthermore, uncertainty quantification is also useful for improving neural operator training via active learning strategies \citep{musekamp2024active}, potentially reducing the cost of generating computationally expensive numerical simulations as training data.

Extensive previous work has shown that the structured uncertainty provided by Gaussian process (GP) models is particularly suitable for such downstream tasks, including 
closed-form acquisition functions for active learning and Bayesian optimization for optimal selection of future queries or experiments \citep{Garnett2023BayesianOptimization};
enabling online model adaptation to continuously update with new data while preserving consistency with prior knowledge \citep{Sliwa2024EfficientWeightSpaceLaplaceGaussian};
facilitating sensitivity analysis through the GP's kernel structure revealing system responses to parameter changes;
and seamlessly integrating with probabilistic numerical computation \citep{hennig2022probabilistic,Pfoertner2022LinPDEGP} to quantify, marginalize, and propagate computational uncertainty.

Motivated by the capabilities of GPs, we propose \nola, a practical yet theoretically sound framework that provides \emph{linearized predictive uncertainty in neural operators}.
\nola quantifies uncertainty over the mapping learned by the neural operator via a Gaussian process with values in a separable Banach space of functions---a higher-order generalization of GPs that, when evaluated, returns a Gaussian random function rather than a finite-dimensional Gaussian random variable, as in standard GPs.
This \emph{function}-valued GP is constructed through model linearization from a Gaussian distribution quantifying uncertainty in the neural operator's (finite-dimensional) weight space.
We show that \nola can be interpreted as a probabilistic generalization of the concept of \emph{currying} in functional programming.
This connection makes \nola compatible with established methods for quantifying weight-space uncertainty in deep neural networks, including the Laplace approximation \citep{Ritter2018ASL,daxberger2021laplace, kristiadi2020being,papamarkou2024position}, SWAG \citep{maddox2019simple}, or mean-field variational inference \citep{blundell2015weight}.
\nola is practical, introduces minimal computational overhead, and can be applied post-hoc, without requiring to retrain the neural operator.
It scales to large models and datasets and, like neural operators, is inherently resolution-agnostic.
We demonstrate the capabilities of the framework in a case study on Fourier neural operators.

In \cref{sec:background}, we review the fundamentals of neural operators and  (multi-output) Gaussian processes.
\cref{sec:method} presents our main contribution.
We first develop Gaussian processes that take values in (infinite-dimensional) Banach spaces of functions, along with the notion of \emph{probabilistic currying}, which formalizes their equivalence to (multi-output) Gaussian processes.
Using probabilistic currying, we construct function-valued Gaussian processes from neural operators with Gaussian weight beliefs.
In \cref{sec:related-work}, we discuss prior work on operator learning and related uncertainty quantification.
Finally, we demonstrate in \cref{sec:experiments} the effectiveness of \nola in common PDE learning settings.

\section{Background}
\label{sec:background}
We first review neural operators, with emphasis on Fourier neural operators, which serve as the primary case study for our analysis. Then, we provide an overview of multi-output Gaussian processes.

\subsection{Neural Operators}
\label{sec:no}
\emph{Neural operators} (NOs) \citep{Kovachki2023NeuralOperator} are neural network architectures that map between (infinite-dimensional) Banach spaces of functions.
A neural operator is a function $\nosym \colon \noinsp \times \noparamsp \to \nooutsp$, where
\begin{itemize}
  \item $\noinsp$ is a (separable) Banach space of functions $\noinfn \colon \noinspdom \to \R^{\noinspcodomdim}$ with domain $\noinspdom \subset \R^{\noinspdomdim}$,
  \item $\nooutsp$ is a (separable) Banach space of functions $\nooutfn \colon \nooutspdom \to \nooutspcodom$ with domain $\nooutspdom \subset \R^{\nooutspdomdim}$,
  \item $\noparamsp$ is a set of parameters (typically $\noparamsp \subset \R^\noparamdim$ or $\noparamsp \subset \C^\noparamdim$).
\end{itemize}
To keep training tractable, neural operators are trained on datasets $\set{(\noinfn^{(i)}(\mat{X}_{\noinsp}^{(i)}), \nooutfn^{(i)}(\mat{X}_{\nooutsp}^{(i)}))}_{i = 1}^n$ consisting of pairs of input and corresponding output functions $(\noinfn^{(i)}, \nooutfn^{(i)}) \in \noinsp \times \nooutsp$ that are discretized at finitely many points $\mat{X}_{\noinsp}^{(i)} \in (\noinspdom)^{n_\noinsp^{(i)}}$ and $\mat{X}_{\nooutsp}^{(i)} \in (\nooutspdom)^{n_\nooutsp^{(i)}}$, respectively.
The training objective is typically given by the empirical risk
\begin{equation*}
  R(\noparam)
  = \frac{1}{n} \sum_{i = 1}^n L(\nooutfn^{(i)}(\mat{X}_{\nooutsp}^{(i)}), \nosym(\noinfn^{(i)}(\mat{X}_{\noinsp}^{(i)}), \noparam)(\mat{X}_{\nooutsp}^{(i)}))
\end{equation*}
or a regularized version of the empirical risk.
Neural operators were originally developed, and are commonly used, to learn the solution operator of non-linear, parametric partial differential equations.
In this case, typically, $\noinspdom = \nooutspdom$, the input functions $\noinfn \in \noinsp$ correspond to parameters and/or initial conditions of the PDE, and the output functions $\nooutfn \in \nooutsp$ are the corresponding solutions of the PDE (at later time points).
There are many different realizations of the abstract neural operator framework, including low-rank neural operators \citep{Kovachki2023NeuralOperator}, (multipole) graph neural operators \citep{Li2020NOGraphKernelNetwork,Li2020MultipoleGraphNeuralOperator}, and (spherical) Fourier neural operators \citep{Li2020FourierNeuralOperator,Bonev2023SphericalFourierNeural}.

\begin{example}[Fourier Neural Operators]
    \label{sec:fno}
    As a case study we will focus on \emph{Fourier neural operators} (FNOs) \citep{Li2020FourierNeuralOperator}, a popular variant of the neural operator architecture that applies all spatially global operations in the spectral domain.
    An FNO $\nosym$ transforms a periodic input function $\noinfn$ into a periodic output function $\nosym(\noinfn, \noparam)(\nooutpt) \defeq \noprojection(\nohiddenstate^{(L)}(\nooutpt), \noparam_{\noprojection})$ with
    \begin{equation*}
    \begin{split}
       \nohiddenstate^{(l + 1)}_i(\nooutpt)  \defeq
        \sigma^{(l)} \Bigg(  \sum_{j = 1}^{d_\vec{v}'} &
        \evallinop{\fourier\inv}*{\left( \fnoR^{(l)}_{kij} \evallinop{\fourier}*{\nohiddenstate^{(l)}_j}_k \right)_{k = 1}^{k_\text{max}}}(\nooutpt)  \\
       &+ \fnoW^{(l)}_{ij} \nohiddenstate^{(l)}_j(\nooutpt) \Bigg)
      \end{split}
    \end{equation*}
    for $l = 1, \dotsc, L - 1$ and $\nohiddenstate^{(1)}(\nooutpt) = \nolifting(\vec{a}(\nooutpt), \noparam_{\nolifting}) \in \R^{\nohiddenstatecodomdim}$, where $\fourier$ denotes the Fourier transform of a periodic function.\footnote{More precisely, the operator $\fourier \colon \mathrm{L}_2(\setsym{T}^d, \R) \to \ell_2(\C)$ maps a real-valued square-integrable function on the $d$-dimensional torus to the coefficients of the corresponding Fourier series.}  $\nolifting \colon \R^{\noinspcodomdim} \times \noparamsp_\nolifting \to \R^{\nohiddenstatecodomdim}$ and $\noprojection \colon \R^{\nohiddenstatecodomdim} \times \noparamsp_\noprojection \to \nooutspcodom$ are parametric functions called \emph{lifting} and \emph{projection}, respectively. $\fnoR^{(l)} \in \C^{k_\text{max} \times \nohiddenstatecodomdim \times \nohiddenstatecodomdim}$ and $\fnoW^{(l)} \in \R^{\nohiddenstatecodomdim \times \nohiddenstatecodomdim}$, and $\noparam = (\noparam_{\nolifting}, \fnoW^{(1)}, \fnoR^{(1)}, \dotsc, \fnoW^{(L - 1)}, \fnoR^{(L - 1)}, \noparam_\noprojection)$.
    The map $\nohiddenstate^{(l)} \to \nohiddenstate^{(l + 1)}$ is the $l$-th \emph{Fourier layer}.
    If the inputs $\noinfn$ are discretized on a regular grid, $\fourier$ can be computed by a real fast Fourier transform (RFFT).
\end{example}

\subsection{(Gaussian) Random Processes}
\label{sec:gp}
Aiming to generalize the notion of a Gaussian process later, we provide its formal definition from mathematical statistics that is rarely used in machine learning.
For a set $\Omega$ and a set $F$ of functions on $\Omega$ with values in a measurable space, we denote by $\sigma(F)$ the smallest $\sigma$-algebra for which all $f \in F$ are measurable.
The \emph{Borel $\sigma$-algebra} on a topological space $\Omega$ is denoted by $\borelsigalg{\Omega}$.
A \emph{random process} on a probability space $(\Omega, \sigalg, \prob{})$ with index set $\gpidcs$ and values in a measurable space $(S, \sigalg_S)$ is a function $\gpsym \colon \gpidcs \times \Omega \to S$ such that $\gpsym(\gpidx, \,\cdot\,)$ is $\sigalg$-$\sigalg_S$-measurable for all $\gpidx \in \gpidcs$.
We use the shorthand $\gpsym(\gpidx) \defeq \gpsym(\gpidx, \cdot)$.
One can show that $\omega \mapsto \gpsym(\cdot, \omega)$ is a function-valued random variable with values in $(\maps{\gpidcs}{\R}, \sigma(\delta_\gpidcs))$, where $\delta_A$ denotes the set of point evaluation functionals on the set $\maps{A}{B}$ of functions from $A$ to $B$.
This justifies interpreting random processes as probability measures over functions $\gpidcs \to \R$.
A random process is called \emph{Gaussian} or a \emph{Gaussian process} (GP) if it has values in $(\R, \borelsigalg{\R})$ and $\omega \mapsto (\gpsym(\gpidx_1, \omega), \dotsc, \gpsym(\gpidx_n, \omega))$ is an $\R^n$-valued Gaussian random variable for all $n \in \N$ and $\gpidx_1, \dotsc, \gpidx_n \in \gpidcs$.
The \emph{mean function} of $\gpsym$ is given by $\gpidx \mapsto \expectation[\prob{}]{\gpsym(\gpidx)}$ and the \emph{covariance function} of $\gpsym$ is given by $(\gpidx_1, \gpidx_2) \mapsto \covariance[\prob{}]{\gpsym(\gpidx_1)}{\gpsym(\gpidx_2)}$.
We denote by $\gpsym \sim \gp{\gpmean}{\gpcov}$ that $\gpsym$ is a Gaussian process with mean function $\gpmean$ and covariance function $\gpcov$.

It is common to extend the concept of a GP to \emph{finitely} many output dimensions.
A \emph{$d'$-output Gaussian process} is a random process with values in $(\R^d, \borelsigalg{\R^d})$ and
\(
  \omega \mapsto \begin{pmatrix}
    \mogpsym(\mogpidx_1, \omega)\T & \cdots & \mogpsym(\mogpidx_n, \omega)\T
  \end{pmatrix}\T
\)
is an $\R^{n \cdot d'}$-valued Gaussian random variable for all $n \in \N$, and $\mogpidx_1, \dotsc, \mogpidx_n \in \mogpidcs$.
We use the shorthand $\mogpsym(\mogpidx) \defeq \mogpsym(\mogpidx, \cdot)$.
The \emph{mean function} of $\mogpsym$ is given by $\mogpidx \mapsto \expectation[\prob{}]{\mogpsym(\mogpidx)} \in \R^{d'}$ and the \emph{covariance function} of $\mogpsym$ is $(\mogpidx_1, \mogpidx_2) \mapsto \covariance[\prob{}]{\mogpsym(\mogpidx_1)}{\mogpsym(\mogpidx_2)} \in \R^{d' \times d'}$.
We denote by $\mogpsym \sim \gp{\mogpmean}{\mogpcov}$ that $\mogpsym$ is a multi-output Gaussian process with mean function $\mogpmean$ and covariance function $\mogpcov$.
While the notion of a multi-output Gaussian process might seem more general than the notion of a Gaussian process, it is possible to ``emulate'' a function with multiple outputs by augmenting the input space of a Gaussian process:

\begin{lemma}
  \label{lem:mogp-equiv-gp}
  Let $(\Omega, \sigalg, \prob{})$ be a probability space, $\mogpsym \colon \mogpidcs \times \Omega \to \R^{d'}$, $\setsym{I} = \set{1, \dotsc, d'}$, and $\gpsym \colon (\mogpidcs \times \setsym{I}) \times \Omega \to \R$ with $(\mogpsym(\mogpidx, \cdot))_i = \gpsym((\mogpidx, i), \cdot)$ for all $\mogpidx \in \mogpidcs$ and $i \in \setsym{I}$ ($\prob{}$-almost surely).
  Then $\mogpsym \sim \gp{\mogpmean}{\mogpcov}$ if and only if $\gpsym \sim \gp{\gpmean}{\gpcov}$, where, for all $\mogpidx \in \mogpidcs$ and $i \in \setsym{I}$,
  \begin{equation*}
    (\mogpmean(\mogpidx))_i = \gpmean(\mogpidx, i),
  \end{equation*}
  as well as, for all $\mogpidx_1, \mogpidx_2 \in \mogpidcs$ and $i, j \in \setsym{I}$,
  \begin{equation*}
    (\mogpcov(\mogpidx_1, \mogpidx_2))_{ij} = \gpcov((\mogpidx_1, i), (\mogpidx_2, j)).
  \end{equation*}
\end{lemma}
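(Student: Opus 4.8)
The plan is to prove both implications by reducing the multi-output finite-dimensional distributions to the scalar ones through a linear reindexing, exploiting the fact that a random vector is Gaussian if and only if every one of its coordinates-stacked-together version is Gaussian, which is automatic here since the two stacked vectors are related by a fixed permutation. Concretely, fix $n \in \N$ and $\mogpidx_1, \dotsc, \mogpidx_n \in \mogpidcs$. On one side we have the $\R^{n \cdot d'}$-valued random vector $\vec{Z} \defeq (\mogpsym(\mogpidx_1, \cdot)\T, \dotsc, \mogpsym(\mogpidx_n, \cdot)\T)\T$; on the other, using that $\mogpidcs \times \setsym{I}$ is the index set of $\gpsym$, we have for the $n \cdot d'$ points $(\mogpidx_r, i)$ with $r \in \set{1, \dotsc, n}$, $i \in \setsym{I}$ the $\R^{n \cdot d'}$-valued random vector $\vec{Z}' \defeq (\gpsym((\mogpidx_r, i), \cdot))_{r, i}$. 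The defining almost-sure relation $(\mogpsym(\mogpidx, \cdot))_i = \gpsym((\mogpidx, i), \cdot)$ shows that $\vec{Z}$ and $\vec{Z}'$ have identical entries up to the ordering convention, hence $\vec{Z} = \Pi \vec{Z}'$ $\prob{}$-almost surely for a fixed permutation matrix $\Pi$ (which is invertible). Since the Gaussianity of a random vector is preserved under invertible linear maps in both directions, $\vec{Z}$ is Gaussian if and only if $\vec{Z}'$ is Gaussian.

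First I would note that every finite subset of the index set $\mogpidcs \times \setsym{I}$ of $\gpsym$ is of the form $\set{(\mogpidx_r, i_s)}$, but it suffices to check the Gaussianity condition on the ``full grid'' sets $\set{(\mogpidx_r, i) : r \le n, i \in \setsym{I}}$: an arbitrary finite collection of points in $\mogpidcs \times \setsym{I}$ is contained in such a grid (take the $\mogpidx_r$ to be the distinct first components appearing), and a subvector of a Gaussian vector is Gaussian. This reduction lets me pair up the finite-dimensional distributions of $\mogpsym$ and $\gpsym$ exactly as above. Running the permutation argument in both directions establishes the equivalence $\mogpsym \sim \gp{\mogpmean}{\mogpcov} \iff \gpsym \sim \gp{\gpmean}{\gpcov}$ at the level of ``is a Gaussian process,'' i.e.\ that all finite-dimensional marginals are Gaussian.

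It then remains to identify the mean and covariance functions. This is a direct computation using the almost-sure identity: $(\mogpmean(\mogpidx))_i = \expectation[\prob{}]{(\mogpsym(\mogpidx, \cdot))_i} = \expectation[\prob{}]{\gpsym((\mogpidx, i), \cdot)} = \gpmean(\mogpidx, i)$, and likewise $(\mogpcov(\mogpidx_1, \mogpidx_2))_{ij} = \covariance[\prob{}]{(\mogpsym(\mogpidx_1, \cdot))_i}{(\mogpsym(\mogpidx_2, \cdot))_j} = \covariance[\prob{}]{\gpsym((\mogpidx_1, i), \cdot)}{\gpsym((\mogpidx_2, j), \cdot)} = \gpcov((\mogpidx_1, i), (\mogpidx_2, j))$, with both sides well-defined precisely because the relevant marginals are Gaussian (hence have finite moments). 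Conversely, these same equalities read backwards recover $\gpmean$ and $\gpcov$ from $\mogpmean$ and $\mogpcov$, so the stated formulas are forced in either direction.

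The only genuine subtlety — and the step I would be most careful about — is the bookkeeping around the $\prob{}$-almost-sure qualifier: the hypothesis only gives $(\mogpsym(\mogpidx, \cdot))_i = \gpsym((\mogpidx, i), \cdot)$ almost surely for each fixed $\mogpidx$ and $i$, so for a fixed finite grid one takes the (finite, hence still probability-one) intersection of the corresponding null-complement events before asserting $\vec{Z} = \Pi \vec{Z}'$ almost surely. Since Gaussianity of a random vector and its mean/covariance are determined by its law, equality almost surely is all that is needed, and no uncountable intersection of null sets ever arises. Everything else is the routine linear-algebra fact that permutations preserve joint Gaussianity, which I would either cite or dispatch in one line via characteristic functions.
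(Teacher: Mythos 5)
Your proof is correct: the reduction of finite-dimensional marginals via a fixed permutation (plus coordinate-selection maps for arbitrary, possibly repeated index tuples), the finite intersection of almost-sure events, and the direct moment identification together give a complete argument. The paper itself states this lemma without proof, treating it as elementary, and your argument is exactly the standard one it implicitly relies on.
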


\section{\nola: Linearized Predictive Uncertainty in Neural Operators}
\label{sec:method}
In this section, we show how to obtain \emph{linearized predictive uncertainty in neural operators} (\nola).
We leverage model linearization to propagate Gaussian weight-space uncertainty through the neural operator to its predictions.
\nola can be applied to trained models as a post-processing step, and does not require expensive retraining.
Furthermore, \nola employs the framework of function-valued Gaussian processes.
To that end, we first develop the concept of a function-valued Gaussian process and draw an important parallel with currying in functional programming, which offers a natural interpretation of our method.
\cref{fig:visual-abstract} illustrates the main steps comprising our methodology.

\subsection{Function-Valued Gaussian Processes and Probabilistic Currying}
\label{sec:fvgp}
We want to use model linearization to extend the Gaussian belief over the parameters of a neural network $\dnn \colon \R^\dnnindim \times \R^\dnnparamdim \to \R^{\dnnoutdim}$ into a (multi-output) Gaussian process belief over the function learned by the neural network.
However, this is not immediately applicable to neural operators, since their outputs do not lie in $\R^{\dnnoutdim}$, but in a potentially infinite-dimensional Banach space of functions.
Hence, we need to generalize (multi-output) Gaussian processes to the notion of a \emph{Banach-valued Gaussian process}.
\begin{definition}[Banach-Valued Gaussian Process]
  Let $\vvfvgpoutsp$ be a real separable Banach space and $\vvgpfctls$ a set\footnote{Technically, $\vvgpfctls$ needs to separate the points in $\vvfvgpoutsp$. See \cref{sec:dual-spaces} for additional details.} of linear functionals on $\vvfvgpoutsp$.
  A random process $\vvfvgp \colon \vvfvgpidcs \times \Omega \to \vvfvgpoutsp$ on a probability space $(\Omega, \sigalg, \prob{})$ with index set $\vvgpidcs$ and values in $(\vvfvgpoutsp, \sigma(\vvgpfctls))$ is called \emph{Gaussian} or a \emph{Gaussian process} if $\omega \mapsto (\vvgp(a_1, \omega), \dotsc, \vvgp(a_n, \omega))$ is a jointly Gaussian random variable\footnote{See \cref{def:gaussian-measure} and \cref{rmk:joint-gaussian-measure}.} for all $n \in \N$ and $\vvgpidx_1, \dotsc, \vvgpidx_n \in \vvgpidcs$.
\end{definition}
As above, we use the shorthand $\vvfvgp(a) \defeq \vvfvgp(a, \cdot)$.
Moreover, the map $\omega \mapsto \vvfvgp(\cdot, \omega)$ is a random variable with values in the space of (linear and non-linear) operators $(\maps{\vvfvgpidcs}{\vvfvgpoutsp}, \sigma(\delta_{\vvfvgpidcs}))$.
This warrants the interpretation of $\vvfvgpoutsp$-valued Gaussian processes as \emph{Gaussian random operators}.

In the context of neural operators, $\vvfvgpoutsp$ is a Banach space of $\R^{\nooutspcodomdim}$-valued functions on a common domain $\nooutspdom$.
In this case, we can show that $\vvfvgpoutsp$-valued Gaussian processes are closely related to multi-output Gaussian processes with an augmented input space.
This is in analogy to \cref{lem:mogp-equiv-gp}, but requires some additional technical assumptions.
\begin{restatable}[Probabilistic Currying in Banach Spaces; proof in \cref{sec:bvgp}]{theorem}{ThmProbabilisticCurrying}
  \label{cor:fvgp-equiv-mogp}
  Let $(\Omega, \sigalg, \prob{})$ be a probability space and $\vvfvgpoutsp$ a real separable Banach space of $\vvfvgpoutspcodom$-valued functions with domain $\fvgpoutspdom$.
  Let $\vvfvgp \colon \vvfvgpidcs \times \Omega \to \vvfvgpoutsp$ and $\mogpsym \colon (\vvfvgpidcs \times \fvgpoutspdom) \times \Omega \to \vvfvgpoutspcodom$ such that $\vvfvgp(\vvfvgpidx, \cdot)(\vvfvgpoutspdomel) = \mogpsym((\vvfvgpidx, \vvfvgpoutspdomel), \cdot)$ for all $\vvfvgpidx \in \vvfvgpidcs$ and $\vvfvgpoutspdomel \in \fvgpoutspdom$ ($\prob{}$-almost surely).
  Then
  \begin{enumerate*}[label=(\roman*)]
    \item $\vvfvgp$ is a random process with values in $(\vvfvgpoutsp, \sigma(\delta_{\vvfvgpoutsp}))$ if and only if $\mogpsym$ is a $\R^{\vvfvgpoutspcodomdim}$-valued random process,
    \item $\vvfvgp$ is Gaussian if and only if $\mogpsym$ is Gaussian, and
    \item if all evaluation maps $\delta_\vvfvgpoutspdomel \colon \vvfvgpoutsp \to \vvfvgpoutspcodom, \vvfvgpout \mapsto \vvfvgpout(\vvfvgpoutspdomel)$ are continuous, then (i) holds for $\vvfvgp$ with values in $(\vvfvgpoutsp, \borelsigalg{\vvfvgpoutsp})$.
  \end{enumerate*}
\end{restatable}

\cref{cor:fvgp-equiv-mogp} reveals an insight into the abstract concept of function-valued Gaussian processes: Function-valued Gaussian processes are equivalent to (multi-output) Gaussian processes with augmented input spaces.
This equivalence enables the translation of real-valued GPs, a computationally feasible structure, into infinite-dimensional function-valued objects.

\paragraph{Probabilistic Currying}
We note that \cref{cor:fvgp-equiv-mogp} constitutes a probabilistic analogue of the concept of \emph{currying} from functional programming (and category theory more generally).
The \namecref{cor:fvgp-equiv-mogp} shows the equivalence of the vector-valued (Gaussian) random function $\mogpsym \colon \vvfvgpidcs \times \vvfvgpoutspdom \to \vvfvgpoutspcodom$ and the (Gaussian) random operator $\vvfvgp \colon \vvfvgpidcs \to (\vvfvgpoutspdom \to \vvfvgpoutspcodom)$ with $\vvfvgp(\vvfvgpidx)(\vvfvgpoutspdomel) \stackrel{\text{a.s.}}{=} \mogpsym(\vvfvgpidx, \vvfvgpoutspdomel)$.

\begin{example}[Currying a Continuous Bivariate Gaussian Process]
  Let $\gpsym \sim \gp{\gpmean}{\gpcov}$ be a bivariate 2-output Gaussian process with compact index set $\mathbb{X}_1 \times \mathbb{X}_2 \subset \R^2$ on $(\Omega, \sigalg, \prob{})$ with ($\prob{}$-almost surely) continuous paths.
  For instance, this assumption is fulfilled if $\gpmean$ is continuous and $\gpcov$ is a multivariate Matérn covariance function \citep{DaCosta2023SamplePathRegularity}.
  Then $\fvgpidx \mapsto \gpsym(\fvgpidx, \cdot)$ is a function-valued Gaussian process.
  More precisely, \cref{cor:fvgp-equiv-mogp} shows that the map $\fvgpsym \colon \mathbb{X}_1 \times \Omega \to \cfns{\mathbb{X}_2}, (\fvgpidx, \omega) \mapsto (\fvgpoutspdomel \mapsto \gpsym((\fvgpidx, \fvgpoutspdomel), \omega))$ is a $\cfns{\mathbb{X}_2}$-valued Gaussian process with index set $\mathbb{X}_1$.
\end{example}

Thus, an intuitive way to understand function-valued Gaussian processes is as objects that, when evaluated, return a Gaussian process.
Currying can also be used to relate the mean and covariance functions of function-valued or more general vector-valued (Gaussian) random processes, and their counterparts defined on the corresponding multi-output (Gaussian) random process.
As this discussion is rather technical, we defer it to \cref{sec:bvgp}.

\Cref{sec:theory} provides an in-depth explanation of our theoretical framework and contains a plethora of theoretical results on Gaussian processes with values in arbitrary (infinite-dimensional) vector spaces that are not necessarily Banach or function spaces.
For example, such results are vital for quantifying uncertainty in neural operators applied to PDEs that only admit weak solutions (see \cref{sec:bvgp-lugano}).

\subsection{Linearization Turns Neural Operators into Function-Valued Gaussian Processes}
\label{sec:no-fvgp}
We use the notion of function-valued Gaussian processes to develop \nola.
We delineate the key components into different steps, visually represented in \cref{fig:visual-abstract}.

\paragraph{Step 0}
Let $\nosym \colon \noinsp \times \noparamsp \to \nooutsp \subset \maps{\nooutspdom}{(\nooutspcodom)}$ be a neural operator as in \cref{sec:no} with $\noparamsp = \R^\noparamdim$.

\paragraph{Step 1}
By uncurrying $\nosym$, we define the function
\begin{equation*}
  \dnn \colon
  (\noinsp \times \nooutspdom) \times \noparamsp \to \nooutspcodom,
  ((\noinfn, \nooutpt), \noparam) \mapsto \nosym(\noinfn, \noparam)(\nooutpt).
\end{equation*}

\paragraph{Step 2}
We obtain a Gaussian belief $\llaparam \sim \gaussian{\nolaparammean}{\nolaparamcov}$ over the parameters of the network.
In Bayesian deep learning, a common way to obtain this Gaussian belief is by placing a Gaussian prior $p(\llaparam)$ on the network's parameters and then approximating the posterior distribution given the data $p(\llaparam \mid \ds)$.
Well-established (approximate) inference techniques to obtain the posterior over $\llaparam$ include the Laplace approximation \citep{Ritter2018ASL, daxberger2021laplace, immer2020improving}, variational inference \citep{Graves2011PracticalVI, blundell2015weight, Khan2018FastAS}, and SWAG \citep{Maddox2019ASB}.
Since $\dnn$ has values in $\nooutspcodom$, we can, as in \cref{sec:appendix-laplace}, linearize the model around $\nolaparammean$:
\begin{equation*}
\begin{split}
  & \dnn((\noinfn, \nooutpt), \noparam)
   \approx
  \noladnnlin((\noinfn, \nooutpt), \noparam)
  \\
  & \defeq
  \dnn((\noinfn, \nooutpt), \nolaparammean)  + \jacobian[\noparam]{\dnn((\noinfn, \nooutpt), \noparam)}[\nolaparammean] (\noparam - \nolaparammean)
  \end{split}
\end{equation*}
to arrive at an induced approximate $\nooutspcodomdim$-output Gaussian process belief $\mogpsym \sim \gp{\mogpmean}{\mogpcov}$ with index set $\noinsp \times \nooutspdom$, $\mogpmean(\noinfn, \nooutpt) = \dnn(\noinfn, \nolaparammean)(\nooutpt)$, and
\begin{equation*}
  \begin{gathered}
    \mogpcov((\noinfn_1, \nooutpt_1), (\noinfn_2, \nooutpt_2)) \\
    = 
    \jacobian[\noparam]{\dnn((\noinfn_1, \nooutpt_1), \noparam)}[\nolaparammean]
    \nolaparamcov
    \jacobian[\noparam]{\dnn((\noinfn_2, \nooutpt_2), \noparam)}[\nolaparammean]\T.
  \end{gathered}
\end{equation*}

\paragraph{Step 3}
Probabilistic currying constructs a Gaussian random operator from $\mogpsym$.
Namely, we define the function
\begin{equation*}
  \vvfvgp \colon
  \noinsp \times \Omega \to \nooutsp,
  (\noinfn, \omega) \mapsto (\nooutpt \mapsto \mogpsym((\noinfn, \nooutpt), \omega)).
\end{equation*}
(For this $\vvfvgp$ to be well-defined, we need to assume that $\mogpsym((\noinfn, \,\cdot\,), \omega)$ is $\in \nooutsp$ for all $\noinfn \in \noinsp$. See also \cref{sec:bvgp-lugano}).
\cref{cor:fvgp-equiv-mogp} then shows that $\vvfvgp$ is a $\nooutsp$-valued Gaussian process.
Moreover, $\expectation{\vvfvgp(\noinfn)(\nooutpt)} = \nosym(\noinfn, \nolaparammean)(\nooutpt)$, and
\begin{equation*}
  \begin{gathered}
    \covariance{\vvfvgp(\noinfn_1)(\nooutpt_1)}{\vvfvgp(\noinfn_2)(\nooutpt_2)} \\
    = 
    \jacobian[\noparam]{\nosym(\noinfn_1, \noparam)(\nooutpt_1)}[\nolaparammean]
    \nolaparamcov
    \jacobian[\noparam]{\nosym(\noinfn_2, \noparam)(\nooutpt_2)}[\nolaparammean]\T.
  \end{gathered}
\end{equation*}
The entire construction, in particular \cref{cor:fvgp-equiv-mogp}, still applies if $\llaparam$ is not Gaussian.
In this case, $\mogpsym$ and $\vvfvgp$ are no longer Gaussian, but the formulae for the mean and covariance functions remain valid.
We derive a generalization of the method for general separable Banach spaces $\nooutsp$ in \cref{sec:bvgp-lugano}.
For instance, this is useful if the functions in $\nooutsp$ are not pointwise defined, such as weak solutions of PDEs.

\subsubsection{Case Study: Fourier Neural Gaussian Random Operators}
\label{sec:fno-fvgp}
The exposition so far applies generally to neural operators.
For Fourier neural operators, a particularly efficient representation of the function-valued posterior process is available.
To simplify the exposition, we focus on the case where the Gaussian belief is restricted to the parameters of the final Fourier block $\noparam_{L - 1} \defeq (\fnoR^{(L - 1)}, \fnoW^{(L - 1)})$.
This is a common approach in the context of last-layer Laplace approximation \citep{kristiadi2020being}.
In \cref{sec:last-layer-luno-fno}, we show that the function-valued GP obtained by applying \nola in this case takes the form
\begin{align*}
    \fvgpsym(\noinfn)(\nooutpt)
    & = \tilde{\noprojection}(\mogpmean_{\rvec{z}^{(L - 1)}}(\nooutpt)) + \Big( \jacobian{\tilde{\noprojection}}[\mogpmean_{\rvec{z}^{(L - 1)}}(\nooutpt)] \\
    & \qquad \cdot (\rvec{z}^{(L - 1)}(\nooutpt) - \mogpmean_{\rvec{z}^{(L - 1)}}(\nooutpt)) \Big),
\end{align*}
i.e., $\fvgpsym(\noinfn) \sim \gp{\mogpmean_\noinfn}{\mogpcov_\noinfn}$ with
\begin{align*}
  \mogpmean_\noinfn(\nooutpt)
  & = \nosym(\noinfn, \noparam^\star)(\nooutpt), \quad \text{and}\\
  \mogpcov_\noinfn(\nooutpt_1, \nooutpt_2)
  & = \jacobian{\tilde{\noprojection}}[\mogpmean_{\rvec{z}^{(L - 1)}}(\nooutpt_1)] \mogpcov_{\rvec{z}^{(L - 1)}}(\nooutpt_1, \nooutpt_2) \\
    & \qquad \cdot \jacobian{\tilde{\noprojection}}[\mogpmean_{\rvec{z}^{(L - 1)}}(\nooutpt_2)]\T,
\end{align*}
where $\rvec{z}^{(L - 1)} \sim \gp{\mogpmean_{\rvec{z}^{(L - 1)}}}{\mogpcov_{\rvec{z}^{(L - 1)}}}$ is a multi-output \emph{parametric} Gaussian process whose moments only depend on $\nohiddenstate^{(L - 1)}$, $\nolaparammean$, and $\nolaparamcov$, and $\tilde{\noprojection} = \noprojection(\,\cdot\,, \noparam_\noprojection) \circ \sigma^{(L - 1)}$.

There are two practical benefits arising from this representation.
First, computing the moments of, and drawing samples from $\fvgpsym(\noinfn)$ only needs access to the hidden state $\nohiddenstate^{(L - 1)}$ of the neural operator.
We can thus evaluate the Gaussian process belief at arbitrary output points $\nooutpt \in \nooutspdom$, without the need to compute more than one (full) forward pass of the neural operator.
Secondly, since the Gaussian process belief $\fvgpsym(\noinfn)$ over the output function is parametric, we can efficiently sample entire functions from it that can then be lazily evaluated at arbitrary points.
This is in contrast to general non-parametric Gaussian processes, where one typically discretizes the GP before drawing samples of the function values at the given finite set of points.
Such lazy functional samples can be used e.g. for active experimental design and Bayesian optimization \citep{wilsonPathwiseConditioningGaussian2021}.

\begin{figure*}[t]
    \centering
    \includegraphics[width=\textwidth]{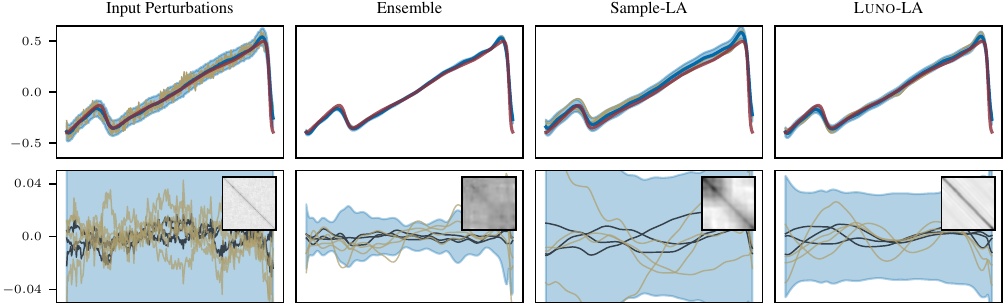}
    \caption{%
        FNO predictive uncertainty quantified by several different methods.
        Top row: target function \figlegendline{TUred},
        mean \figlegendline{TUblue} and 1.96 standard deviations \figlegendshaded{TUblue} of, as well as samples \figlegendline{TUgold} from, the predictive belief.
        For the ensemble, the samples are four of the ensemble members.
        Bottom row: spread of the predictive distribution around the mean.
        For the sample-/ensemble-based methods, we construct a Gaussian distribution from the empirical covariance matrix and draw four samples \figlegendline{TUgold}.
        We plot 1.96 standard deviations \figlegendshaded{TUblue} of the predictive belief, as well as the top-three eigenfunctions \figlegendline{TUdark} and a heatmap of the predictive covariance matrix (top right corner of panels).
    }
    \label{fig:sample_structure}
\end{figure*}

\section{Related Work}
\label{sec:related-work}

\citet{azizzadenesheli2024neural} provide a comprehensive overview of neural operator architectures.
These include graph neural operators \citep{Li2020MultipoleGraphNeuralOperator}, physics-informed neural operators \citep{li2024physics},  multi-wavelet neural operators \citep{NEURIPS2021_c9e5c2b5} and the widely used Fourier neural operators \citep{Li2020FourierNeuralOperator}.
FNOs have gained particular prominence, finding applications across various PDE problems \citep{pathak2022fourcastnet,zhang2023learning,JMLR:v24:23-0064,rashid2022learning,qin2024toward,kossaifi2023multi,bonev2023spherical}. 
Theoretical foundations for FNOs have been established, with \citet{kovachki2021universal} proving their universal approximation capabilities for continuous operators, and \citet{lanthaler2024discretization} analyzing discretization-induced aliasing errors.

While neural operator architectures have advanced, incorporating uncertainty estimation remains challenging. Recent work has approached this problem from different angles. 
\citet{GARG2023105685} applied variational inference to estimate Bayesian posteriors in DeepONets. More closely related to our work, \citet{magnani2022approximate} developed uncertainty estimates for graph neural operators using Laplace approximation, though their approach does not extend to FNOs, nor does it consider function space formulations. 
\citet{kumar2024neural} combined Gaussian Process priors with Wavelet Neural Operators, optimizing hyperparameters through negative log-marginal likelihood minimization. Additional Bayesian operator frameworks have been explored by \citet{garg2022variational,batlle2024kernel,zou2024neuraluq,MORA2025117581}.

Function-valued Gaussian processes have been studied in the Hilbert space setting by \citet{OWHADI2023133592,batlle2024kernel}.
Our approach formulates the theory natively within the context of Banach spaces, as neural operators are defined as mappings between such spaces.
In the Appendix we prove that, when restricted to the Hilbert space setting, the theoretical framework \citet{OWHADI2023133592,batlle2024kernel} embeds into ours.

To generate a probabilistic belief over a neural network's weights, various Bayesian posterior approximation techniques are available. One of the most popular is the Laplace approximation, introduced to deep learning by \citet{mackay1992bayesianadapt}, which has gained popularity in the Bayesian deep learning community \citep{Ritter2018ASL,daxberger2021laplace, kristiadi2020being,papamarkou2024position}.
This is also due to its scalability, achieved through various strategies including using log-posterior Hessian approximations \citep{Ritter2018ASL,martens2020new}, treating only a subset of the model probabilistically \citep{pmlr-v139-daxberger21a}, employing linearized Laplace \citep{foong2019between,immer2020improving}, or using scalable Gaussian processes methods \citep{deng2022accelerated,ortega2023variational}.
Other Bayesian deep learning methods include variational inference \citep{Graves2011PracticalVI, blundell2015weight, Khan2018FastAS, Zhang2018NoisyNG}, Markov Chain Monte Carlo \citep{neal1996, Welling2011BayesianLV, zhang2019cyclical}, SWAG \citep{maddox2019simple}, or heuristic methods \citep{gal2016dropout, Maddox2019ASB}.
Finally, a widely used approach for uncertainty quantification in deep learning is ensembles \cite{deep_ensembles,hansen1990neural}, that train multiple independent neural networks with different random initializations and aggregate the predictions.

\section{Experiments}
\label{sec:experiments}

\begin{figure*}[t]
    \centering
    \includegraphics[width=\textwidth]{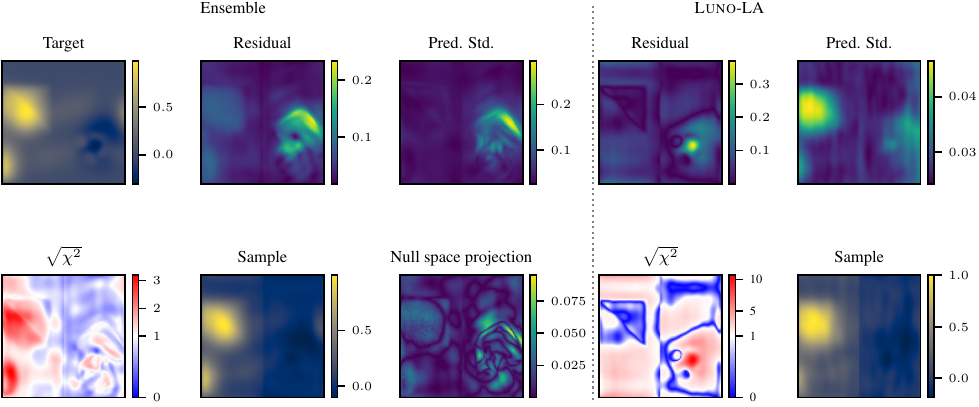}
    \caption{Comparing an ensemble (left), \nola-LA (right). Top row shows target, residuals, and the predictive standard deviation. Bottom row shows the absolute ratio of the pointwise residual and the predictive standard deviation as well as a sample from the predictive belief. Since the uncertainty structure of the ensemble prediction is of low rank, we also include its unexplained error by projecting the residual vector onto the null space of the predictive covariance.}
    \label{fig:ood_lugano_ensemble}
\end{figure*}

We evaluate linearized predictive uncertainty (\nola-$\ast$) against sample-based approaches (Sample-$\ast$), which require additional approximations to impose a Gaussian Process structure over the output space.
We consider isotropic Gaussian ($\ast$-Iso) and low-rank Laplace approximated ($\ast$-LA) weight-space uncertainties, in both their sample-based (Sample) and linearized (\nola) forms.
We compare these weight-space-Gaussian methods against \emph{input perturbations} \cite{pathak2022fourcastnet}, and deep ensembles.
Deep ensembles were trained 10 times with different random seeds on the original Fourier neural operator (FNO) architecture.
We evaluate our model on time-dependent PDEs in one and two spatial dimensions, predicting the next time step autoregressively from the previous ten.
We assess uncertainty quantification in two key settings:
\begin{enumerate*}[label=(\arabic*)]
    \item a low-data regime, where the model is trained on a limited number of trajectories, and
    \item out-of-distribution (OOD) scenarios, where physical phenomena unseen during training are introduced at test time.
\end{enumerate*}

We evaluate the predictive uncertainty using standard metrics: the expected root mean squared error (RMSE) of the mean predictions, the expected marginal $\chi^2$ statistics, and the expected marginal negative log-likelihood (NLL) over 250 test input-output pairs.
Hyperparameters are optimized via grid search using the expected marginal NLL on a validation set as the target.
Full details, including data generation, training procedures, uncertainty estimation methods, and more detailed results are provided in the Appendix.

\textbf{Low data regime.}
We train an FNO for 100 epochs on 25 simulated solutions of Burgers' equation with 59-time steps and evaluate their uncertainty on 250 unseen test pairs.
\Cref{fig:sample_structure} visualizes the predictive uncertainty for input perturbations, deep ensemble, Sample-LA, and \nola-LA on a single test data point of Burgers' equation.
\Cref{tab:1d_burgers} shows that \nola-LA outperforms the other approaches.
This trend holds across two other one-dimensional time-dependent PDE datasets, which are included in the Appendix (\cref{tab:1d_hyperdiffusion} and \cref{tab:1d_ks_cons}).
While all methods produce marginal confidence bands around the network prediction, their sample path covariances differ qualitatively.
\begin{table}[ht!]
\small
\centering
\begin{tabular}{l S[table-format=1.2e1] S[table-format=1.3] S[table-format=+1.4]}
\toprule
Method & {RMSE ($\downarrow$)} & {$\chi^2$} & {NLL ($\downarrow$)} \\
\midrule
Input Perturbations & 3.63e-02 & 0.894 & -1.8720 \\
Ensemble & \bfseries 3.49e-02 & 5.597 & -0.8145 \\
Sample-Iso & 3.72e-02 & 0.977 & -1.9341 \\
\nola-Iso & 3.62e-02 & 0.864 & -1.9488 \\
Sample-LA & 5.59e-02 & 2.774 & -1.1572 \\
\nola-LA & 3.62e-02 & 1.022 & \bfseries -2.0787 \\
\bottomrule
\end{tabular}
\caption{Comparison of UQ methods for an FNO trained on 25 trajectories of Burgers' equation.}
\label{tab:1d_burgers}
\end{table}

\begin{table}[ht!]
\centering
\small %
\begin{tabular}{l S[table-format=+1.3] S[table-format=+1.3] S[table-format=+3.3]}
\toprule
Method & {Base} & {Flip} & {Pos-Neg-Flip} \\
\midrule
Input Perturbations & -2.586 & 2.573 & 494.935 \\
Ensemble & \bfseries -5.313 & \bfseries -3.825 & \bfseries -1.014 \\
Sample-Iso & -2.921 & 4.071 & 43.362 \\
\nola-Iso & -2.892 & 3.450 & 37.733 \\
Sample-LA & -2.576 & 4.395 & 27.046 \\
\nola-LA & -2.934 & -1.126 & 1.164 \\
\bottomrule
\end{tabular}
\caption{Expected marginal NLL evaluation across OOD datasets for different methods. Lower is better.}
\label{tab:nll_comparison_table}
\end{table}

\textbf{Out-of-Distribution.}
To assess OOD robustness, we train an FNO (or an ensemble of 10 FNOs) on a two-dimensional Advection-Diffusion equation with initial conditions sampled from Gaussian blobs and a random constant velocity field.
We introduce various additional physical phenomena to the test set.
These include reversing the velocity field at the center (Flip), introducing a triangular heat source (Pos), and a cloud-shaped heat sink (Neg).
\Cref{tab:nll_comparison_table} reports expected marginal NLL over a variation of out-of-distribution datasets. 
Additional and more granular results can be found in the Appendix.
While \nola-LA outperforms the other weight space methods and input perturbations, deep ensembles achieve the lowest expected marginal NLL in next-step prediction.
However, their uncertainty representation is fundamentally different.
\cref{fig:ood_lugano_ensemble} compares deep ensemble with \nola-LA.
Deep ensembles approximate uncertainty using a small set of discrete hypotheses, represented by a collection of point masses in parameter space. 
While this representation is not confined to the analytic form of a Gaussian distribution, it has other constraints: For example, although marginal uncertainty estimates (panel 3 in the figure) can be relatively well-structured, the associated empirical covariance across the ensemble is fundamentally rank-deficient. 
This limitation is critical, as it leaves certain types of errors entirely unaccounted for (panel 8, which projects residuals onto the null space of the ensemble covariance).
By contrast, \nola-LA constructs a covariance matrix whose rank is (in theory) only bounded by the number of parameters considered.\footnote{Numerical instabilities and (near) singular Jacobians might reduce the rank in practice.}
As a result, a plot like panel 8 in \cref{fig:ood_lugano_ensemble} does not make sense for \nola-LA, since, in principle, it explains any variation in the data (albeit with varying calibration).
This behavior is also evident in full-trajectory evaluations.
Although FNOs are trained for next-step prediction, they are often used for auto-regressive roll-outs, where predictions are recursively fed back as inputs.
Such roll-outs cause a subtle yet significant distribution shift, as prediction errors accumulate and are treated as ground truth for subsequent steps. While the deep ensemble improves upon the network prediction in terms of RMSE, its uncertainty estimate does not adapt to the increasing error, as reflected in the NLL (cf. \cref{fig:roll-outs}).

Wall-clock times for single-trajectory predictions across all methods are reported in the Appendix.
Due to the efficiency of Jacobian-vector products and analytical tractability of the inverse real fast Fourier transform, \nola-$\ast$ methods outperform their Samples-$\ast$ counterparts, with \nola-Iso being even faster than the deep ensemble in our implementation. 
Each method comes with its own additional cost.
While deep ensembles need fully separate training runs with different random seeds, \nola-LA's main computational bottleneck is computing the low-rank approximation of the generalized Gauss--Newton matrix (GGN).
This cost is dominated by network size, the selected rank, and the amount of data used for the GGN approximation.

\begin{figure}[t]
    \centering%
    \includegraphics[width=\columnwidth]{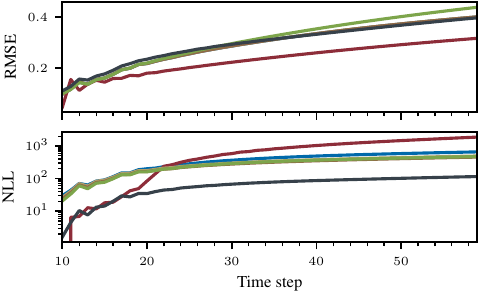}
    \caption{%
        Averaged performance of different UQ methods on an autoregressive rollout of the FNO on 50 trajectories from the Pos-Neg-Flip dataset.
        We compare
        input perturbations \figlegendline{TUblue},
        deep ensembles \figlegendline{TUred},
        Sample-Iso \figlegendline{TUgold},
        \nola-Iso \figlegendline{TUbrown},
        Sample-LA \figlegendline{TUgreen},
        \nola-LA  \figlegendline{TUdark}.
    }
    \label{fig:roll-outs}
\end{figure}

\section{Conclusion}
\label{sec:conclusion}
We introduced \nola, a framework for predictive uncertainty quantification in neural operators using function-valued Gaussian processes. \nola can be interpreted as a probabilistic generalization of currying in functional programming. By leveraging model linearization, it offers a computationally efficient and theoretically grounded approach to incorporating weight-space uncertainties in neural operators.
The framework endows neural operators with structured weight-space uncertainty quantification capabilities while preserving their resolution-agnostic nature.
We demonstrate this for \nola-LA in the FNO setting under low-data regimes and out-of-distribution scenarios.

\nola's main limitation lies in the challenges associated with modeling weight-space covariances. Nevertheless, by successfully constructing a structured Gaussian process over the output space, \nola paves the way for future applications of GP-valued neural operators in scientific and engineering domains.

\section*{Acknowledgments}
The authors gratefully acknowledge financial support by the European Research Council through ERC CoG Action 101123955 ANUBIS ; the DFG Cluster of Excellence “Machine Learning - New Perspectives for Science”, EXC 2064/1, project number 390727645; the German Federal Ministry of Education and Research (BMBF) through the Tübingen AI Center (FKZ: 01IS18039A); the DFG SPP 2298 (Project HE 7114/5-1), and the Carl Zeiss Foundation, (project "Certification and Foundations of Safe Machine Learning Systems in Healthcare"), as well as funds from the Ministry of Science, Research and Arts of the State of Baden-Württemberg.
The authors thank the International Max Planck Research School for Intelligent Systems (IMPRS-IS)
for supporting Emilia Magnani, Marvin Pf\"ortner and Tobias Weber.
We would like to thank Nathaël Da Costa for invaluable feedback on the theoretical aspects of this work.

\bibliography{icml2025}
\bibliographystyle{icml2025}

\newpage
\appendix
\onecolumn

\appendix

\numberwithin{theorem}{section}
\numberwithin{proposition}{section}
\numberwithin{lemma}{section}
\numberwithin{corollary}{section}
\numberwithin{definition}{section}
\numberwithin{assumption}{section}
\numberwithin{remark}{section}
\numberwithin{example}{section}

\onecolumn

\thispagestyle{empty}
\icmltitle{%
  Linearization Turns Neural Operators into Function-Valued Gaussian Processes: Supplementary Materials
}

\section{Theoretical Results}
\label{sec:theory}

\subsection{Dual Spaces}
\label{sec:dual-spaces}
We aim to quantify epistemic uncertainty about learned (nonlinear) operators $\nosym \colon \vvgpidcs \to \vvgpoutsp$, where $\vvgpoutsp$ is a (typically infinite-dimensional) real vector space of functions.
If $\vvgpoutsp$ is a space of real-valued functions, at the very least, we want to be able to express a probabilistic belief over all point evaluations $\nosym(\vvgpidx)(\nooutpt) \rdefeq \delta_{\nooutpt}(\nosym(\vvgpidx))$.
Note that point evaluation $\delta_{\nooutpt} \colon \nooutsp \to \R$ of functions in $\nooutsp$ is a linear map, since $\delta_{\nooutpt}(\alpha_1 \vvgpout_1 + \alpha_2 \vvgpout_2) = (\alpha_1 \vvgpout_1 + \alpha_2 \vvgpout_2)(\nooutpt) = \alpha_1 \vvgpout_1(\nooutpt) + \alpha_2 \vvgpout_2(\nooutpt) = \alpha_1 \delta_{\nooutpt}(\vvgpout_1) + \alpha_2 \delta_{\nooutpt}(\vvgpout_2)$.
Many interesting operations that map functions into real numbers like (point-evaluated) derivatives and integrals are linear.

Now let $\vvgpoutsp$ be an arbitrary real vector space.
Real-valued linear maps on $\vvgpoutsp$ are referred to as \emph{linear functionals}.
The set of all linear functionals on $\vvgpoutsp$ is referred to as the \emph{algebraic dual (space)} of $\vvgpoutsp$ and denoted by $\adualsp{\vvgpoutsp}$.
A subset $\vvgpfctls$ of $\adualsp{\vvgpoutsp}$ is said to be \emph{total} or to \emph{separate the points in $\vvgpoutsp$} if for any $\vvgpout_1, \vvgpout_2 \in \vvgpoutsp$ with $\vvgpout_1 \neq \vvgpout_2$, there is $\ell \in \vvgpfctls$ such that $\ell(\vvgpout_1) \neq \ell(\vvgpout_2)$.
Such subsets are useful, since they allow us to identify elements from the primal space $\vvgpoutsp$ uniquely.
For instance, the set of all point evaluation functionals on a vector space of real-valued functions separates the points in the space.
If $\vvgpoutsp$ is a topological vector space (for instance a separable Banach space in the context of neural operators), then the subspace of continuous linear functionals is denoted by $\dualsp{\vvgpoutsp} \subset \adualsp{\vvgpoutsp}$.

\begin{remark}[The Bidual Embedding]
  \label{rmk:bidual-embedding}
  The algebraic dual space $\adualsp{\vvgpoutsp}$ with pointwise addition and scalar multiplication is a real vector space itself.
  Hence, any subspace $\vvgpfctls \subset \adualsp{\vvgpoutsp}$, has an algebraic dual space $\adualsp{\vvgpfctls}$.
  The elements $\phi \in \adualsp{\vvgpfctls}$ of this space are linear functions mapping linear functionals into real numbers, i.e., $\phi(\ell) \in \R$ for $\ell \in \vvgpfctls \subset \adualsp{\vvgpoutsp}$.
  $\vvgpfctls$ is a vector space of real-valued functions, so we can consider its point evaluation functionals $\delta_{\vvgpout} \colon \vvgpfctls \to \R, \ell \mapsto \ell(\vvgpout)$.
  Note that the map $\iota_{\vvgpoutsp, \adualsp{\vvgpfctls}} \colon \vvgpoutsp \to \adualsp{\vvgpfctls}, \vvgpout \mapsto \delta_{\vvgpout}$ is linear and, if $\vvgpfctls$ separates the points in $\vvgpoutsp$, injective.
  Hence, $\vvgpoutsp$ is isomorphic to its image $\delta_{\vvgpoutsp} \defeq \iota_{\vvgpoutsp, \adualsp{\vvgpfctls}}(\vvgpoutsp)$ under $\iota_{\vvgpoutsp, \adualsp{\vvgpfctls}}$.
  We refer to the map $\iota_{\vvgpoutsp, \adualsp{\vvgpfctls}}$ as the \emph{bidual embedding}.
  Abusing notation, we write $\vvgpoutsp \subset \adualsp{\vvgpfctls}$, $\vvgpout \in \adualsp{\vvgpfctls}$ for $\vvgpout \in \vvgpoutsp$, etc.
\end{remark}

\subsection{Probability Measures on Vector Spaces}
\label{sec:bvgrv}
Our framework models the predictive uncertainty over an output of a neural operator as a random variable with values in an (infinite-dimensional) vector space $\vvgpoutsp$ of functions.
As noted before, we at least want to quantify the uncertainty about a given set $\vvgpfctls \subset \adualsp{\vvgpoutsp}$ of linear functionals.
Hence, we need to make the linear functionals in $\vvgpfctls$ measurable.

Let $(\Omega, \sigalg, \prob{})$ be a probability space, $\vvgpoutsp$ a real vector space and $\vvgpfctls \subset \adualsp{\vvgpoutsp}$ a vector subspace of linear functionals separating the points in $\vvgpoutsp$.
We equip $\vvgpoutsp$ with the smallest $\sigma$-algebra $\sigma(\vvgpfctls)$ that makes all the functionals in $\vvgpfctls$ measurable.
A random variable $\rvar{u}$ with values in $(\vvgpoutsp, \sigma(\vvgpfctls))$ ($\vvgpoutsp$-valued for short) is an $\sigalg$-$\sigma(\vvgpfctls)$-measurable function $\rvar{u} \colon \Omega \to \vvgpoutsp$.

Similar to their finite-dimensional counterparts, probability measures on and random variables with values in (infinite-dimensional) vector spaces admit the definition of a mean and a (cross-)covariance operator.

\begin{definition}[Mean and Covariance Operator {\citep[see e.g.,][Definition 2.2.7]{Bogachev1998GaussianMeasures}}]
  \label{def:mean-cov}
  Let $\gamma$ be a probability measure on $\sigma(\vvgpfctls)$.
  \begin{enumerate}[label=(\alph*)]
    \item If $\vvgpfctls \subset \mathrm{L}_1(\gamma)$, then $m_\gamma \in \adualsp{\vvgpfctls}$ defined by
      \begin{equation*}
        m_\gamma(\ell)
        \defeq \expectation[\gamma]{\ell}
        = \int_{\vvgpoutsp} \ell(\vvgpout) \gamma(\diff \vvgpout)
        \qquad
        \forall \ell \in \vvgpfctls
      \end{equation*}
      is called the \emph{mean} of $\gamma$.
      The mean $m_\rvar{u}$ of a random variable $\rvar{u} \colon \Omega \to \vvgpoutsp$ with values in $(\vvgpoutsp, \sigma(\vvgpfctls))$ is defined as the mean $m_{\prob{} \circ \rvar{u}^{-1}}$ of its law.
    \item If $\vvgpfctls \subset \mathrm{L}_2(\gamma)$, then the linear operator $\linop{C}_\gamma \colon \vvgpfctls \to \adualsp{\vvgpfctls}$ defined by
      \begin{equation*}
        \linop{C}_\gamma(\ell_1)(\ell_2)
        \defeq \covariance[\gamma]{\ell_1}{\ell_2}
        = \int_{\vvgpoutsp} \left( \ell_1(\vvgpout) - m_\gamma(\ell_1) \right) (\ell_2(\vvgpout) - m_\gamma(\ell_2)) \gamma(\diff \vvgpout)
        \qquad
        \forall \ell_1, \ell_2 \in \vvgpfctls
      \end{equation*}
      is called the \emph{covariance operator} of $\gamma$.
      The covariance operator $\linop{C}_\rvar{u}$ of a random variable $\rvar{u} \colon \Omega \to \vvgpoutsp$ with values in $(\vvgpoutsp, \sigma(\vvgpfctls))$ is defined as the covariance operator $\linop{C}_{\prob{} \circ \rvar{u}^{-1}}$ of its law.
  \end{enumerate}
\end{definition}

\begin{definition}[Cross-Covariance Operator]
  \label{def:crosscov}
  Let $\rvar{u}_1, \rvar{u}_2 \colon \Omega \to \vvgpoutsp$ be random variables with values in $(\vvgpoutsp, \sigma(\vvgpfctls))$ such that $\vvgpfctls \subset \mathrm{L}_2(\vvgpoutsp, \sigma(\vvgpfctls), \prob{} \circ \rvar{u}_i^{-1})$ for $i = 1, 2$.
  The operator $\linop{C}_{\rvar{u}_1, \rvar{u}_2} \colon \vvgpfctls \to \adualsp{\vvgpfctls}$ defined by
  \begin{equation*}
    \linop{C}_{\rvar{u}_1, \rvar{u}_2}(\ell_1)(\ell_2)
    \defeq \covariance{\ell_1(\rvar{u}_1)}{\ell_2(\rvar{u}_2)}
    = \int_{\vvgpoutsp} \left( \ell_1(\rvar{u}_1(\omega)) - m_{\rvar{u}_1}(\ell_1) \right) \left( \ell_2(\rvar{u}_2(\omega)) - m_{\rvar{u}_2}(\ell_2) \right) \prob{\diff \omega}
  \end{equation*}
  is called the \emph{cross-covariance operator} between $\rvar{u}_1$ and $\rvar{u}_2$.
\end{definition}

Gaussian measures on $\vvgpoutsp$ are defined by generalizing the closure properties of Gaussian measures on $\R^d$.

\begin{definition}[Gaussian Measure {\citep[see e.g.,][Definition 2.2.1(a)]{Bogachev1998GaussianMeasures}}]
  \label{def:gaussian-measure}
  A probability measure $\gamma$ on $(\vvgpoutsp, \sigma(\vvgpfctls))$ is called \emph{Gaussian} if every linear functional $\ell \in \vvgpfctls$ is a univariate Gaussian random variable on $(\vvgpoutsp, \sigma(\vvgpfctls), \gamma)$.
  A random variable $\rvar{u} \colon \Omega \to \vvgpoutsp$ with values in $(\vvgpoutsp, \sigma(\vvgpfctls))$ is called \emph{Gaussian} if its law $\prob{} \circ \rvar{u}^{-1}$ is Gaussian.
\end{definition}

If $\vvgpfctls$ is not a vector space, then we say that a random variable is Gaussian with values in $(\vvgpoutsp, \sigma(\vvgpfctls))$ if and only if it is Gaussian with values in $(\vvgpoutsp, \sigma(\operatorname{span} \vvgpfctls))$.
Note that $\sigma(\operatorname{span} \vvgpfctls) = \sigma(\vvgpfctls)$ for any subset $\vvgpfctls \subset \adualsp{\vvgpoutsp}$ \citep[by][Definition 1.79 and Theorem 1.91]{Klenke2014Probability}.

\begin{remark}[Jointly Gaussian Measures]
  \label{rmk:joint-gaussian-measure}
  To define Gaussian processes with values in $\vvgpoutsp$, we need the notion of a joint Gaussian measure on $\vvgpoutsp^n$.
  Fortunately, we can also leverage \cref{def:gaussian-measure} for this.
  $\vvgpoutsp^n$ is a vector space under elementwise addition and scalar multiplication.
  Its algebraic dual space $\adualsp{(\vvgpoutsp^n)}$ is isomorphic to $(\adualsp{\vvgpoutsp})^n$, i.e. $\ell \in \adualsp{(\vvgpoutsp^n)}$ if and only if there are $\ell_1, \dotsc, \ell_n \in \adualsp{\vvgpoutsp}$ such that
  \begin{equation*}
    \ell(\vvgpout_1, \dotsc, \vvgpout_n) = \sum_{i = 1}^n \ell_i(\vvgpout_i)
    \qquad \forall \vvgpout_1, \dotsc, \vvgpout_n \in \vvgpoutsp.
  \end{equation*}
  It follows that $\sigma(\vvgpfctls^n) = \sigma(\vvgpfctls)^{\otimes n}$, where the latter denotes the product $\sigma$-algebra.
  Hence, following \cref{def:gaussian-measure}, we call a probability measure $\gamma$ on $(\vvgpoutsp^n, \sigma(\vvgpfctls)^{\otimes n})$ \emph{Gaussian} if every $\ell \in \vvgpfctls^n$ is a univariate Gaussian random variable on $(\vvgpoutsp^n, \sigma(\vvgpfctls)^{\otimes n}, \gamma)$.
\end{remark}

\begin{remark}[Probability Measures on Separable Banach Spaces]
  \label{rmk:prob-sep-banach}
  The case where $\vvgpoutsp$ is a real separable Banach space is of particular interest in the context of neural operators.
  In this case, we choose $\vvgpfctls = \dualsp{\vvgpoutsp}$, i.e. all linear functionals that are continuous with respect to the norm topology.
  Then the $\sigma$-algebra $\sigma(\dualsp{\vvgpoutsp})$ coincides with the Borel $\sigma$-algebra $\borelsigalg{\vvgpoutsp}$ generated by the norm topology \citep[Theorem A.3.7]{Bogachev1998GaussianMeasures}.
  For Gaussian random variables $\rvar{u}$ with values in $(\vvgpoutsp, \borelsigalg{\vvgpoutsp})$ (and Gaussian measure on $\borelsigalg{\vvgpoutsp}$), the mean $m_{\rvar{u}}$ is an element of $\vvgpoutsp$ and the covariance operator maps into $\vvgpoutsp$ \citep[Theorem 3.2.3]{Bogachev1998GaussianMeasures}.
  Moreover, for jointly Gaussian random variables $\rvar{u}_1, \rvar{u}_2$ with values in $(\vvgpoutsp, \borelsigalg{\vvgpoutsp})$, the cross-covariance operator $\linop{C}_{\rvar{u}_1, \rvar{u}_2}$ maps from $\dualsp{\vvgpoutsp}$ to $\vvgpoutsp$ \citep[Theorem 3.2.4]{Bogachev1998GaussianMeasures}.
\end{remark}

\subsection{Random Processes with Values in Vector Spaces}
\label{sec:bvgp}
Now we have all the necessary preliminaries to define a Gaussian process with values in $(\vvgpoutsp, \sigma(\vvgpfctls))$.
\begin{definition}[Gaussian Process]
  A \emph{Gaussian process} with index set $\vvgpidcs$ and values in $(\vvgpoutsp, \sigma(\vvgpfctls))$ on $(\Omega, \sigalg, \prob{})$ is a function $\vvgp \colon \vvgpidcs \times \Omega \to \vvgpoutsp$ such that $\omega \mapsto (\vvgp(a_1, \omega), \dotsc, \vvgp(a_n, \omega))$ is a joint, i.e., $(\vvgpoutsp^n, \sigma(\vvgpfctls)^{\otimes n})$-valued, Gaussian random variable for all $n \in \N$ and $\vvgpidx_1, \dotsc, \vvgpidx_n \in \vvgpidcs$.
\end{definition}

As for real-valued or $\R^d$-valued processes, we can also define mean and covariance functions for random processes with values in arbitrary real vector spaces.
However, their definition is more technically involved.
\begin{definition}
  Let $\vvgp$ be a random process with index set $\vvgpidcs$ and values in $(\vvgpoutsp, \sigma(\vvgpfctls))$ on $(\Omega, \sigalg, \prob{})$.
  \begin{enumerate}[label=(\alph*)]
    \item If $\vvgpfctls \subset \mathrm{L}_1(\prob{} \circ \vvgp(\vvgpidx, \,\cdot\,)^{-1})$ for all $\vvgpidx \in \vvgpidcs$, then the function
      \begin{equation*}
        \vvgpmean \colon \vvgpidcs \to \adualsp{\vvgpfctls},
        \vvgpidx \mapsto m_{\vvgp(\vvgpidx, \,\cdot\,)}
      \end{equation*}
      is called the \emph{mean function} of $\vvgp$.
    \item If $\vvgpfctls \subset \mathrm{L}_2(\prob{} \circ \vvgp(\vvgpidx, \,\cdot\,)^{-1})$ for all $\vvgpidx \in \vvgpidcs$, then the function
      \begin{equation*}
        \vvgpcov \colon \vvgpidcs \times \vvgpidcs \to (\vvgpfctls \to \adualsp{\vvgpfctls}),
        (\vvgpidx_1, \vvgpidx_2) \mapsto \linop{C}_{\vvgp(\vvgpidx_1, \,\cdot\,), \vvgp(\vvgpidx_2, \,\cdot\,)}
      \end{equation*}
      is referred to as the \emph{covariance function} of $\vvgp$.
  \end{enumerate}
\end{definition}

\begin{remark}[Moments of Banach-Valued Gaussian Processes]
  \label{rmk:bvgp}
  If $\vvgpoutsp$ is a separable Banach space, then mean function $\vvgpmean$ takes values in $\vvgpoutsp$ and the covariance function $\vvgpcov$ takes values in the space of nuclear operators $\dualsp{\vvgpoutsp} \to \vvgpoutsp$ \citep[Theorem 3.11.24]{Bogachev1998GaussianMeasures}.
\end{remark}

In the following, we aim to establish a correspondence between (Gaussian) random processes with values in $(\vvgpoutsp, \sigma(\vvgpfctls))$ and (Gaussian) random processes with values in $(\R, \borelsigalg{\R})$, which we dub \emph{(generalized) probabilistic currying}.
Unlike in \cref{lem:mogp-equiv-gp}, we need additional technical assumptions for this to work both ways.
Denote by $\operatorname{scl}_{w*}(\vvgpsdfctls) \defeq \set{\ell \in \vvgpfctls \suchthat \exists \set{\ell_i}_{i \in \N} \subset \vvgpsdfctls \colon \ell_i \to_{w*} \ell}$ the weak-* \emph{sequential} closure of a set $\vvgpsdfctls \subset \vvgpfctls$, where $\ell_i \to_{w*} \ell$ if and only if $\ell_i(\vvgpout) \to \ell(\vvgpout)$ for all $\vvgpout \in \vvgpoutsp$ \citep[Section 5.14]{Aliprantis2006InfiniteDimensionalAnalysis}.
\begin{assumption}
  \label{asm:w*-seq-cl}
  Let $\vvgpsdfctls \subset \vvgpfctls$ a set of linear functionals on $\vvgpoutsp$ such that there is an $\nscl \in \N_0$ with $\operatorname{scl}_{w*}^{\nscl}(\operatorname{span} \vvgpsdfctls) = \vvgpfctls$.
\end{assumption}

\begin{theorem}[Generalized Probabilistic Currying]
  \label{thm:bvgp-equiv-gp}
  Let $\vvgpsdfctls \subset \vvgpfctls$ be a set of linear functionals separating the points in $\vvgpoutsp$.
  Let $\vvgp \colon \vvgpidcs \times \Omega \to \vvgpoutsp$ and $\gpsym \colon (\vvgpidcs \times \vvgpsdfctls) \times \Omega \to \R$ such that $\ell(\vvgp(\vvgpidx, \omega)) = \gpsym((\vvgpidx, \ell), \omega)$ for all $\vvgpidx \in \vvgpidcs$, $\ell \in \vvgpsdfctls$, and $\prob{}$-almost all $\omega \in \Omega$.
  \begin{enumerate}[label=(\roman*)]
    \item \label{item:gen-pcurrying}
      If $\vvgp$ is a random process with values in $(\vvgpoutsp, \sigma(\vvgpfctls))$, then $\gpsym$ is a random process with values in $(\R, \borelsigalg{\R})$, and
    \item  \label{item:gen-pcurrying-gp}
      if $\vvgp$ is Gaussian, then so is $\gpsym$.
  \end{enumerate}
  If \cref{asm:w*-seq-cl} is satisfied, then the reverse implications hold as well.
\end{theorem}

We will need the following generalization of Theorem B.6 from \citep{Pfoertner2022LinPDEGP}.

\begin{lemma}
  \label{lem:scm=>weak-meas-grv}
  Let $\vvgpsdfctls \subset \vvgpfctls$ such that \cref{asm:w*-seq-cl} holds.
  Then $\vvgpsdfctls$ separates the points in $\vvgpoutsp$.
  Moreover, a function $\rvar{u} \colon \Omega \to \vvgpoutsp$ is
  \begin{enumerate}[label=(\alph*)]
    \item \label{item:scl=>meas}
      $\sigalg$-$\sigma(\vvgpfctls)$-measurable if $\ell \circ \rvar{u}$ is $\sigalg$-$\borelsigalg{\R}$-measurable for all $\ell \in \vvgpsdfctls$,
    \item \label{item:scl=>grv}
      a Gaussian random variable with values in $(\vvgpoutsp, \sigma(\vvgpfctls))$ if $(\ell_1 \circ \rvar{u}, \dotsc, \ell_n \circ \rvar{u})$ is jointly Gaussian for all $n \in \N$ and $\ell_1, \dotsc, \ell_n \in \vvgpsdfctls$.
  \end{enumerate}
\end{lemma}
\begin{proof}
  Define $\set{\vvgpsdfctls_n}_{n = 0}^{\nscl}$ with $\vvgpsdfctls_0 \defeq \operatorname{span} \vvgpsdfctls$ and $\vvgpsdfctls_{n + 1} \defeq \operatorname{scl}_{w*}(\vvgpsdfctls_n)$.
  By assumption, $\vvgpsdfctls_{\nscl} = \vvgpfctls$.

  Assume that $\vvgpsdfctls$ does not separate the points in $\vvgpoutsp$.
  Then there is $\vvgpout \in \vvgpoutsp$ such that $\ell(\vvgpout) = 0$ for all $\ell \in \vvgpsdfctls$.
  We proceed by induction.
  Pick $\ell \in \vvgpsdfctls_0$.
  Then there are $\alpha_1, \dotsc, \alpha_m \in \R$ and $\ell_1, \dotsc, \ell_m \in \vvgpsdfctls$ such that $\ell = \sum_{i = 1}^m \alpha_i \ell_i$.
  Hence,
  \begin{equation*}
    \ell(\vvgpout) = \sum_{i = 1}^m \alpha_i \ell_i(\vvgpout) = 0.
  \end{equation*}
  Now assume that $\ell(\vvgpout) = 0$ for $n < \nscl$ and all $\ell \in \vvgpsdfctls_n$.
  Fix $\ell \in \vvgpsdfctls_{n + 1}$.
  Then there is $\set{\ell_i}_{i \in \N} \subset \vvgpsdfctls_n$ such that $\ell_i \to_{w*} \ell$.
  Hence,
  \begin{equation*}
    \ell(\vvgpout) = \lim_{i \to \infty} \ell_i(\vvgpout) = 0.
  \end{equation*}
  All in all, it follows that $\vvgpfctls = \vvgpsdfctls_{\nscl}$ does not separate the points in $\vvgpoutsp$, which is a contradiction.
  Hence $\vvgpsdfctls$ separates the points in $\vvgpoutsp$.

  \begin{itemize}
    \item[\labelcref{item:scl=>meas}] %
      We need to show that $\ell \circ \rvar{u}$ is measurable\footnote{We will drop the $\sigalg$-$\borelsigalg{\R}$ prefix for real-valued functions in this proof.} for all $\ell \in \vvgpfctls$ \citep[Theorem 1.81]{Klenke2014Probability}.
      We proceed by induction.
      Let $\ell \in \vvgpsdfctls_0$.
      Then there are $\alpha_1, \dotsc, \alpha_m \in \R$ and $\ell_1, \dotsc, \ell_m \in \vvgpsdfctls$ such that $\ell = \sum_{i = 1}^m \alpha_i \ell_i$.
      By assumption, $\ell_i \circ \rvar{u}$ is measurable for all $i = 1, \dotsc, m$.
      Hence, $\ell$ is measurable by Theorem 1.91 in \citep{Klenke2014Probability}.
      Now assume that $\ell \circ \rvar{u}$ is measurable for all $\ell \in \vvgpsdfctls_n$ with $n < \nscl$.
      Fix $\ell \in \vvgpsdfctls_{n + 1}$.
      Then there is a sequence $\set{\ell_i}_{i \in \N} \subset \vvgpsdfctls_n$ such that $\ell_i \xrightarrow{w*} \ell$.
      This implies that $\ell_i \circ \rvar{u} \to \ell \circ \rvar{u}$ pointwise, where, by the inductive hypothesis, $\ell_i \circ \rvar{u}$ is measurable for all $i \in \N$.
      Hence, $\ell \circ \rvar{u}$ is measurable for all $\ell \in \vvgpsdfctls_{n + 1}$ \citep[Theorem 1.92]{Klenke2014Probability}.
    \item[\labelcref{item:scl=>grv}] %
      By \labelcref{item:scl=>meas}, $\rvar{u}$ is $\sigalg$-$\sigma(\vvgpfctls)$-measurable.
      It suffices to show that $\rvar{u}$ is Gaussian with values in $(\vvgpoutsp, \sigma(\vvgpsdfctls_n))$ \citep[Definition 2.2.1(i)]{Bogachev1998GaussianMeasures} for all $n = 0, \dotsc, \nscl$, which is well-defined, since $\vvgpsdfctls$ separates the points in $\vvgpoutsp$.
      Again, we proceed by induction on $n$.
      Let $\ell \in \vvgpsdfctls_0$.
      Then there are $\alpha_1, \dotsc, \alpha_m \in \R$ and $\ell_1, \dotsc, \ell_m \in \vvgpsdfctls$ such that $\ell = \sum_{i = 1}^m \alpha_i \ell_i$.
      By the closure properties of Gaussians under linear maps, we have that $\ell \circ \rvar{u}$ is Gaussian.
      Hence, $\rvar{u}$ is Gaussian with values in $(\vvgpoutsp, \sigma(\vvgpsdfctls_0))$.
      Now assume that $\rvar{u}$ is Gaussian with values in $(\vvgpoutsp, \sigma(\vvgpsdfctls_n))$ for $n < \nscl$.
      Fix $\ell \in \vvgpsdfctls_{n + 1}$.
      Then there is a sequence $\set{\ell_i}_{i \in \N} \subset \vvgpsdfctls_n$ such that $\ell_i \xrightarrow{w*} \ell$.
      This implies that $\ell_i \circ \rvar{u} \to \ell \circ \rvar{u}$ pointwise, where, by the inductive hypothesis, $\ell_i \circ \rvar{u}$ is Gaussian for all $i \in \N$.
      Since pointwise limits of Gaussians random variables are Gaussian, $\ell \circ \rvar{u}$ is Gaussian.
      Hence, $\rvar{u}$ is Gaussian with values in $(\vvgpoutsp, \sigma(\vvgpsdfctls_{n + 1}))$.
  \end{itemize}
\end{proof}

\begin{proof}[Proof of \cref{thm:bvgp-equiv-gp}]
  \begin{itemize}
    \item[$\Rightarrow$] %
      \begin{itemize}
        \item[\labelcref{item:gen-pcurrying}] Holds by definition.
        \item[\labelcref{item:gen-pcurrying-gp}] %
          Let $\vvgpidx_1, \dotsc, \vvgpidx_n \in \vvgpidcs$ and $\ell_1, \dotsc, \ell_n \in \vvgpfctls$.
          By \cref{rmk:joint-gaussian-measure}, the linear functionals
          \begin{equation*}
            \tilde{\ell}_i \colon \vvgpoutsp^n \to \R,
            (\vvgpout_1, \dotsc, \vvgpout_n) \mapsto \ell_i(\vvgpout_i)
          \end{equation*}
          are measurable with respect to $\sigma(\vvgpfctls)^{\otimes n}$.
          Moreover, $\omega \mapsto (\vvgp(\vvgpidx_1, \omega), \dotsc, \vvgp(\vvgpidx_n, \omega))$ is Gaussian by assumption.
          Hence,
          \begin{align*}
            \omega \mapsto
            & \left( \tilde{\ell}_i(\vvgp(\vvgpidx_1, \omega), \dotsc, \vvgp(\vvgpidx_n, \omega)) \right)_{i = 1}^n \\
            & = \left( \ell_i(\vvgp(\vvgpidx_i, \omega)) \right)_{i = 1}^n \\
            & = \left( \gpsym((\vvgpidx_i, \ell_i), \omega) \right)_{i = 1}^n \\
          \end{align*}
          is Gaussian with values in $\R^n$.
      \end{itemize}
    \item[$\Leftarrow$] %
      \begin{itemize}
        \item[\labelcref{item:gen-pcurrying}] %
          Follows directly by applying \cref{lem:scm=>weak-meas-grv}\labelcref{item:scl=>meas} to each $\vvgp(\vvgpidx, \,\cdot\,)$ individually.
        \item[\labelcref{item:gen-pcurrying-gp}] %
          Let $\vvgpidx_1, \dotsc, \vvgpidx_n \in \vvgpidcs$.
          We have to show that $\omega \mapsto (\vvgp(\vvgpidx_1, \omega), \dotsc, \vvgp(\vvgpidx_n, \omega))$ is a Gaussian random variable with values in $\vvgpoutsp^n$.
          It is easy to check that \cref{asm:w*-seq-cl} holds for $\vvgpsdfctls^n \subset \vvgpfctls^n$.
          Hence, by \cref{lem:scm=>weak-meas-grv}\labelcref{item:scl=>grv}, $\omega \mapsto (\vvgp(\vvgpidx_1, \omega), \dotsc, \vvgp(\vvgpidx_n, \omega))$ is Gaussian with values in $\vvgpoutsp^n$.
      \end{itemize}
  \end{itemize}
\end{proof}

The following \namecref{cor:gen-pcurrying-banach} shows that \cref{asm:w*-seq-cl} is automatically fulfilled for $\vvgpfctls = \dualsp{\vvgpoutsp}$ in separable Banach spaces.

\begin{corollary}[Generalized Probabilistic Currying in Separable Banach Spaces]
  \label{cor:gen-pcurrying-banach}
  Let $\vvgpoutsp$ be a real separable Banach space and $\vvgpsdfctls \subset \dualsp{\vvgpoutsp}$ a set of continuous linear functionals separating the points in $\vvgpoutsp$.
  Let $\vvgp \colon \vvgpidcs \times \Omega \to \vvgpoutsp$ and $\gpsym \colon (\vvgpidcs \times \vvgpsdfctls) \times \Omega \to \R$ such that $\ell(\vvgp(\vvgpidx, \omega)) = \gpsym((\vvgpidx, \ell), \omega)$ for all $\vvgpidx \in \vvgpidcs$, $\ell \in \vvgpsdfctls$, and $\prob{}$-almost all $\omega \in \Omega$.
  \begin{enumerate}[label=(\roman*)]
    \item $\vvgp$ is a random process with values in $(\vvgpoutsp, \borelsigalg{\vvgpoutsp})$ if and only if $\gpsym$ is a random process with values in $(\R, \borelsigalg{\R})$, and
    \item $\vvgp$ is Gaussian if and only if $\gpsym$ is Gaussian.
  \end{enumerate}
\end{corollary}
\begin{proof}
  We will show that \cref{asm:w*-seq-cl} is fulfilled for $\vvgpfctls = \dualsp{\vvgpoutsp}$ and $\nscl = 1$.
  Let $\iota \colon \vvgpoutsp \to (\vvgpsdfctls \to \R), \vvgpout \mapsto (\ell \mapsto \ell(\vvgpout))$, which is linear and injective, since $\vvgpsdfctls$ separates the points in $\vvgpoutsp$.
  Then $\tilde{\vvgpoutsp} \defeq \iota(\vvgpoutsp)$ is isomorphic to $\vvgpoutsp$ (as a vector space).
  Hence, $\tilde{\vvgpoutsp}$ equipped with the norm $\norm{\phi}_{\tilde{\vvgpoutsp}} \defeq \norm{\iota^{-1}(\phi)}_{\vvgpoutsp}$ is a real separable Banach space which is isometrically isomorphic to $\vvgpoutsp$.
  Moreover, $\tilde{\vvgpoutsp}$ is a space of functions with continuous point evaluation functionals, since
  \begin{equation*}
    \abs{\delta_\ell(\iota(\vvgpout))}
    = \abs{\ell(\vvgpout)}
    \le \norm{\ell}_{\dualsp{\vvgpoutsp}} \norm{\vvgpout}_{\vvgpoutsp}
    = \norm{\ell}_{\dualsp{\vvgpoutsp}} \norm{\iota(\vvgpout)}_{\tilde{\vvgpoutsp}}.
  \end{equation*}
  Thus, there is a sequence $\set{\delta_{\ell_i}}_{i \in \N} \subset \dualsp{\tilde{\vvgpoutsp}}$ separating the points in $\vvgpoutsp$ \citep[Theorem 4.10]{Steinwart2024BanachGRVConditioning}.
  This implies that $\set{\ell_i}_{i \in \N} \subset \vvgpsdfctls$ separates the points in $\vvgpoutsp$.
  Finally, it follows that $\dualsp{\vvgpoutsp} = \operatorname{scl}_{w*}(\operatorname{span} \vvgpsdfctls)$ \citep[Proposition 4.3]{Steinwart2024BanachGRVConditioning}.
\end{proof}

If $\fvgpoutsp \subset \maps{\fvgpoutspdom}{\R}$ is a vector space of real-valued\footnote{By a generalization of \cref{lem:mogp-equiv-gp}, an analogous result holds in vector spaces of $\R^{\vvfvgpoutspcodomdim}$-valued functions.} functions and $\vvgpfctls = \operatorname{span} \delta_{\fvgpoutspdom}$, then \cref{thm:bvgp-equiv-gp,cor:gen-pcurrying-banach} become substantially sharper.
\begin{corollary}[Probabilistic Currying]
  \label{cor:pcurrying}
  Let $\fvgpoutsp \subset \maps{\fvgpoutspdom}{\R}$ be a vector space of real-valued functions,
  $\fvgpsym \colon \fvgpidcs \times \Omega \to \fvgpoutsp$, and
  $\gpsym \colon (\fvgpidcs \times \fvgpoutspdom) \times \Omega \to \R$
  such that $\fvgpsym(\fvgpidx, \omega)(\fvgpoutspdomel) = \gpsym((\fvgpidx, \fvgpoutspdomel), \omega)$ for all $\fvgpidx \in \fvgpidcs$, $\fvgpoutspdomel \in \fvgpoutspdom$, and $\prob{}$-almost all $\omega \in \Omega$.
  Then
  \begin{enumerate}[label=(\roman*)]
    \item \label{item:pcurrying-rp} $\fvgpsym$ is a random process with values in $(\fvgpoutsp, \sigma(\delta_{\fvgpoutspdom}))$ if and only if $\gpsym$ is a random process with values in $(\R, \borelsigalg{\R})$,
    \item \label{item:pcurrying-mean} $\fvgpsym$ has a mean function $\fvgpmean$ with values in $\fvgpoutsp$ if and only if $\gpsym$ has a mean function $\gpmean$, where $\fvgpmean(\fvgpidx) = \gpmean(\fvgpidx, \,\cdot\,)$ for all $\fvgpidx \in \fvgpidcs$,
    \item \label{item:pcurrying-cov} $\fvgpsym$ has a covariance function $\fvgpcov$ with values in $\operatorname{span} \delta_{\fvgpoutspdom} \to \fvgpoutsp$ if and only if $\gpsym$ has a covariance function $\gpcov$, where $\fvgpcov(\fvgpidx_1, \fvgpidx_2)(\delta_{\fvgpoutspdomel}) = \gpcov((\fvgpidx_1, \fvgpoutspdomel), (\fvgpidx_2, \,\cdot\,))$ for all $\fvgpidx_1, \fvgpidx_2 \in \fvgpidcs$, and $\fvgpoutspdomel \in \fvgpoutspdom$, and
    \item \label{item:pcurrying-gp} $\fvgpsym$ is Gaussian if and only if $\gpsym$ is Gaussian.
  \end{enumerate}
  If $\fvgpoutsp$ is a separable Banach space with continuous point evaluation functionals, then
  \begin{enumerate}[label=(\roman*), resume]
    \item \labelcref{item:pcurrying-rp,item:pcurrying-gp} hold for $\fvgpsym$ with values in $(\fvgpoutsp, \borelsigalg{\fvgpoutsp})$, and,
    \item if it exists, then the covariance function $\fvgpcov$ in \labelcref{item:pcurrying-cov} has values in $\dualsp{\fvgpoutsp} \to \fvgpoutsp$, where
      \begin{equation*}
        \evallinop{\fvgpcov(\fvgpidx_1, \fvgpidx_2)}{\ell}(\fvgpoutspdomel)
        = \ell(\gpcov((\fvgpidx_1, \,\cdot\,), (\fvgpidx_2, \fvgpoutspdomel)))
      \end{equation*}
      for all $\ell \in \dualsp{\fvgpoutsp}$ and $\fvgpoutspdomel \in \fvgpoutspdom$.
  \end{enumerate}
\end{corollary}
\begin{proof}
  Follows from \cref{thm:bvgp-equiv-gp} and \cref{cor:gen-pcurrying-banach}.
\end{proof}

Finally, \cref{cor:fvgp-equiv-mogp} from the main text is merely a corollary of the results developed above.
\ThmProbabilisticCurrying*
\begin{proof}
  Follows from \cref{cor:pcurrying,lem:mogp-equiv-gp}.
\end{proof}

\subsection{Banach-Valued Gaussian Processes from Linearized Neural Operators}
\label{sec:bvgp-lugano}
For simplicity of the exposition, we limited the construction of the \nola framework in \cref{sec:no-fvgp} to neural operators, which map into a Banach space of functions with continuous point evaluation functionals.
This limits its applicability, especially for solving PDEs, whose solutions are often not defined pointwise, but rather elements of Sobolev spaces $W^{p,k}(\nooutspdom) \subset \mathrm{L}^p(\nooutspdom)$.
Hence, in this section, we extend \nola to neural operators that map into abstract separable Banach\footnote{With minor modifications, the construction below works in arbitrary real vector spaces.} spaces $\nooutsp$.

\paragraph{Step 0}
Let $\vvgpoutsp$ be a real separable Banach space, $\vvgpidcs$ a set, $\noparamsp \subset \R^{\noparamdim}$ a subspace, and $\vvno \colon \vvgpidcs \times \noparamsp \to \vvgpoutsp$ a neural operator.

\paragraph{Step 1}
First, we select a subset $\vvgpsdfctls \subset \dualsp{\vvgpoutsp}$ of continuous linear functionals separating the points in $\vvgpoutsp$, for which we want to quantify predictive uncertainty under the neural operator.
Define
\begin{equation*}
  \sdnn \colon (\vvgpidcs \times \vvgpsdfctls) \times \noparamsp \to \R,
  ((\vvgpidx, \ell), \noparam) \mapsto \ell(\vvno(\vvgpidx, \noparam)).
\end{equation*}
This is an uncurried version of the neural operator.
To see this, note that a neural operator $\vvno \colon \vvgpidcs \times \noparamsp \to \vvgpoutsp$ can be uniquely identified with the function
\begin{equation*}
  \tilde{\vvno} \colon \vvgpidcs \times \noparamsp \to (\vvgpsdfctls \to \R), (\vvgpidx, \noparam) \mapsto (\ell \mapsto \ell(\vvno(\vvgpidx, \noparam))).
\end{equation*}
\begin{proof}
  Let $\vvno_1, \vvno_2 \colon \vvgpidcs \times \noparamsp \to \vvgpoutsp$ be neural operators with $\vvno_1 \neq \vvno_2$.
  Then there are $\vvgpidx \in \vvgpidcs$ and $\noparam \in \noparamsp$ such that $\vvno_1(\vvgpidx, \noparam) \neq \vvno_2(\vvgpidx, \noparam)$.
  Since $\vvgpsdfctls$ separates the points in $\vvgpoutsp$, this implies that there is $\ell \in \vvgpsdfctls$ such that $\tilde{\vvno_1}(\vvgpidx, \noparam)(\ell) \defeq \vvno_1(\vvgpidx, \noparam) \neq \vvno_2(\vvgpidx, \noparam) \rdefeq \tilde{\vvno_2}(\vvgpidx, \noparam)(\ell)$.
  Hence, $\tilde{\vvno_1} \neq \tilde{\vvno_2}$.
\end{proof}

\paragraph{Step 2}
We model the uncertainty over the parameters as a random variable $\llaparam \colon \Omega \to \noparamsp$ on a probability space $(\Omega, \sigalg, \prob{})$ with $\operatorname{supp}(\llaparam) = \noparamsp$.
As in \cref{sec:no-fvgp}, we will now linearize $\sdnn$ in $\noparam$ around a point $\noparam_0 \in \noparamsp$.
To achieve this, we assume that the directional derivatives
\begin{equation*}
  \dderivat{\noparam}{\sdnn((\vvgpidx, \ell), \,\cdot\,)}{\noparam_0}
  = \lim_{h \to 0} \frac{\sdnn((\vvgpidx, \ell), \noparam_0 + h \noparam) - \sdnn((\vvgpidx, \ell), \noparam_0)}{h}
\end{equation*}
at $\noparam_0$ exist for all $\vvgpidx \in \vvgpidcs$, $\ell \in \vvgpsdfctls$, and $\noparam \in \noparamsp$, and are linear in $\noparam$.
For instance, this is the case if $\sdnn((\vvgpidx, \ell), \,\cdot\,)$ is differentiable at $\noparam_0$.
In this case, the linearization of $\sdnn$ is given by
\begin{equation*}
  \sdnn((\vvgpidx, \ell), \noparam)
  \approx \sdnnlin((\vvgpidx, \ell), \noparam)
  \defeq \sdnn((\vvgpidx, \ell), \noparam_0) + \dderivat{\noparam - \noparam_0}{\sdnn((\vvgpidx, \ell), \,\cdot\,)}{\noparam_0}.
\end{equation*}
Then the function
\begin{equation*}
  \gpsym \colon (\vvgpidcs \times \vvgpsdfctls) \times \Omega \to \R,
  ((\vvgpidx, \ell), \omega) \mapsto \sdnnlin((\vvgpidx, \ell), \llaparam(\omega))
\end{equation*}
is a random process.
If $\llaparam$ has a mean $\nolaparammean$, then $\gpsym$ has a mean function
\begin{equation*}
  \gpmean \colon \vvgpidcs \times \vvgpsdfctls \to \R,
  (\vvgpidx, \ell) \mapsto \sdnn((\vvgpidx, \ell), \noparam_0) + \dderivat{\nolaparammean - \noparam_0}{\sdnn((\vvgpidx, \ell), \,\cdot\,)}{\noparam_0},
\end{equation*}
and if $\llaparam$ has a covariance matrix $\nolaparamcov$, then $\gpsym$ has a covariance function given by
\begin{equation*}
  \gpcov \colon (\vvgpidcs \times \vvgpsdfctls) \times (\vvgpidcs \times \vvgpsdfctls) \to \R,
  ((\vvgpidx_1, \ell_1),
  (\vvgpidx_2, \ell_2)) \mapsto \sum_{i, j = 1}^\noparamdim \partial_i{\sdnn((\vvgpidx, \ell_1), \,\cdot\,)}(\noparam_0) \nolaparamcov_{ij} \partial_j{\sdnn((\vvgpidx, \ell_2), \,\cdot\,)}(\noparam_0).
\end{equation*}
Note that $\gpsym$ and $\gpmean$ are linear in $\ell$ and $\gpcov$ is bilinear in $(\ell_1, \ell_2)$.
Moreover, if $\llaparam$ is Gaussian, then $\gpsym$ is a Gaussian process.

\paragraph{Step 3}
Finally, we construct a $\vvgpoutsp$-valued (Gaussian) random process $\vvgp \colon \vvgpidcs \times \Omega \to \vvgpoutsp$ by probabilistically currying $\gpsym$.
However, this is more challenging for an abstract $\vvgpoutsp$.
Intuitively, we want to undo the uncurrying operation from Step 1.
To this end, we assume\footnote{See \cref{sec:weak-gateaux,sec:bidual-rproc} for more details on this assumption.} that there is $\vvgp \colon \vvgpidcs \times \Omega \to \vvgpoutsp$ with $\ell(\vvgp(\vvgpidx, \omega)) = \gpsym((\vvgpidx, \ell), \omega)$ for all $\vvgpidx \in \vvgpidcs$, $\ell \in \vvgpsdfctls$, and $\prob{}$-almost all $\omega \in \Omega$.
In this case, \cref{cor:gen-pcurrying-banach} ensures that
\begin{enumerate}[label=(\roman*)]
  \item $\vvgp$ is a random process with values in $(\vvgpoutsp, \borelsigalg{\vvgpoutsp})$,
  \item $\vvgp$ has a mean function $\vvgpmean \colon \vvgpidcs \to \vvgpoutsp$ with
    \begin{equation*}
      \ell(\vvgpmean(\vvgpidx))
      = \gpmean(\vvgpidx, \ell)
      = \sdnn((\vvgpidx, \ell), \noparam_0) + \dderivat{\nolaparammean - \noparam_0}{\sdnn((\vvgpidx, \ell), \,\cdot\,)}{\noparam_0}
    \end{equation*}
    if $\llaparam$ has a mean vector $\nolaparammean$,
  \item $\vvgp$ has a covariance function $\vvgpcov \colon \vvgpidcs \times \vvgpidcs \to (\dualsp{\vvgpoutsp} \to \vvgpoutsp)$ with
    \begin{equation*}
      \ell_2(\vvgpcov(\vvgpidx_1, \vvgpidx_2)(\ell_1))
      = \gpcov((\vvgpidx_1, \ell_1), (\vvgpidx_2, \ell_2))
      = \sum_{i, j = 1}^\noparamdim \partial_i{\sdnn((\vvgpidx, \ell_1), \,\cdot\,)}(\noparam_0) \nolaparamcov_{ij} \partial_j{\sdnn((\vvgpidx, \ell_2), \,\cdot\,)}(\noparam_0)
    \end{equation*}
    if $\llaparam$ has a covariance matrix $\nolaparamcov$, and
  \item $\vvgp$ is a Gaussian process if $\llaparam$ is Gaussian.
\end{enumerate}

\subsubsection{Weak G\^ateaux Differentiability}
\label{sec:weak-gateaux}
The existence of $\vvgp \colon \vvgpidcs \times \Omega \to \vvgpoutsp$ with $\ell(\vvgp(\vvgpidx, \omega)) = \gpsym((\vvgpidx, \ell), \omega)$ for all $\vvgpidx \in \vvgpidcs$, $\ell \in \vvgpsdfctls$, and $\prob{}$-almost all $\omega \in \Omega$ is equivalent to $\vvno(\vvgpidx, \,\cdot\,)$ being \emph{$\tau(\vvgpoutsp,\vvgpsdfctls^\#)$-Gâteaux differentiable at $\noparam_0$}, i.e., there is a linear operator $\gateauxat{\vvno(\vvgpidx, \,\cdot\,)}{\noparam_0} \colon \noparamsp \to \vvgpoutsp$ such that
\begin{equation}
  \label{eqn:no-gateaux-derivative}
  \gateauxat{\vvno(\vvgpidx, \,\cdot\,)}{\noparam_0}(\noparam)
  = \lim_{h \to 0} \frac{\vvno(\vvgpidx, \noparam_0 + h \noparam) - \vvno(\vvgpidx, \noparam_0)}{h}
  \qquad
  \text{in } (\vvgpoutsp, \tau(\vvgpoutsp,\vvgpsdfctls^\#))
\end{equation}
for all $\noparam \in \noparamsp$, where $\tau(\vvgpoutsp,\vvgpsdfctls^\#)$ is the smallest topology on $\vvgpoutsp$ for which all functionals in $\vvgpsdfctls$ are continuous.
\begin{proof}
  Note that, since $\noparamsp$ is a metric space, we can take sequential limits in \cref{eqn:no-gateaux-derivative} without loss of generality.
  \begin{itemize}
    \item[$\Rightarrow$]
      Fix $\vvgpidx \in \vvgpidcs$.
      We will constuct the G\^ateaux derivative from $\tilde{\vvgp}(\vvgpidx, \omega)$.
      Since $\tilde{\vvgp}(\vvgpidx, \omega) \in \iota_{\vvgpoutsp,\vvgpsdfctls^\#}(\vvgpoutsp)$ for $\prob{}$-almost all $\omega \in \Omega$, there is $N \in \sigalg$ with $\prob{N} = 0$ and $\tilde{\vvgp}(\vvgpidx, \omega) \in \iota_{\vvgpoutsp,\vvgpsdfctls^\#}(\vvgpoutsp)$ for all $\omega \in \Omega \setminus N$.
      We have $\operatorname{supp} \llaparam = \noparamsp$ and hence there are $\omega_1, \dotsc, \omega_d \in \noparamsp$ such that $\set{\vec{b}_i}_{i = 1}^d$ with $\vec{b}_i \defeq \llaparam(\omega_i) - \noparam_0$ is a basis of $\noparamsp$.
      Let $\lambda \colon \noparamsp \to \vvgpoutsp$ be the unique linear operator with
      \begin{align*}
        \lambda(\vec{b}_i)
        & \defeq \iota_{\vvgpoutsp,\vvgpsdfctls^\#}^{-1}(\tilde{\vvgp}(\vvgpidx, \omega_i)) - \vvno(\vvgpidx, \noparam_0) \\
        & = \iota_{\vvgpoutsp,\vvgpsdfctls^\#}^{-1}(\gpsym((\vvgpidx, \,\cdot\,), \omega_i) - \sdnn((\vvgpidx, \cdot), \noparam_0)) \\
        & = \iota_{\vvgpoutsp,\vvgpsdfctls^\#}^{-1}(\ell \mapsto \dderivat{\llaparam(\omega_i) - \noparam_0}{\sdnn((\vvgpidx, \ell), \,\cdot\,)}{\noparam_0}) \\
        & = \iota_{\vvgpoutsp,\vvgpsdfctls^\#}^{-1}(\ell \mapsto \dderivat{\vec{b}_i}{\sdnn((\vvgpidx, \ell), \,\cdot\,)}{\noparam_0}),
      \end{align*}
      i.e. $\lambda(\noparam) = \iota_{\vvgpoutsp,\vvgpsdfctls^\#}^{-1}(\ell \mapsto \dderivat{\noparam}{\sdnn((\vvgpidx, \ell), \,\cdot\,)}{\noparam_0})$ forall $\noparam \in \noparamsp$.
      Since $\noparamsp$ is finite dimensional, $\lambda$ is $\tau_{\noparamsp}$-$\tau(\vvgpoutsp,\vvgpsdfctls^\#)$-continuous, where $\tau_{\noparamsp}$ is the norm topology on $\noparamsp$.
      Let $\set{h_n}_{n \in \N} \subset \R$ be any null sequence.
      Then for any $\ell \in \vvgpsdfctls$ we have
      \begin{align*}
        \ell \left( \frac{\vvno(\vvgpidx, \noparam_0 + h_n \noparam) - \vvno(\vvgpidx, \noparam_0)}{h_n} \right)
        & = \frac{\sdnn((\vvgpidx, \ell), \noparam_0 + h_n \noparam) - \sdnn((\vvgpidx, \ell), \noparam_0)}{h_n}\\ 
        & \xrightarrow{n \to \infty} \dderivat{\noparam}{\sdnn((\vvgpidx, \ell), \,\cdot\,)}{\noparam_0}
        & = \ell(\lambda(\noparam)).
      \end{align*}
      Hence, $\vvno(\vvgpidx, \,\cdot\,)$ is $\tau(\vvgpoutsp,\vvgpsdfctls^\#)$-Gâteaux differentiable at $\noparam_0$ with G\^ateaux derivative $\gateauxat{\vvno(\vvgpidx, \,\cdot\,)}{\noparam_0} = \lambda$.
    \item[$\Leftarrow$] If $\vvno(\vvgpidx, \,\cdot\,)$ is $\tau(\vvgpoutsp,\vvgpsdfctls^\#)$-Gâteaux differentiable at $\noparam_0$ for all $\vvgpidx \in \vvgpidcs$, then we can construct $\vvgp$ as
      \begin{equation}
        \label{eqn:vvgp-gateaux}
        \vvgp \colon \vvgpidcs \times \Omega \to \vvgpoutsp,
        (\vvgpidx, \omega) \mapsto \vvno(\vvgpidx, \noparam_0) + \gateauxat{\vvno(\vvgpidx, \,\cdot\,)}{\noparam_0}(\llaparam(\omega) - \noparam_0)
      \end{equation}
      Then
      \begin{align*}
        \ell(\vvgp(\vvgpidx, \omega))
        & = \ell(\vvno(\vvgpidx, \noparam_0)) + \dderivat{\llaparam(\omega) - \noparam_0}{\ell(\vvno(\vvgpidx, \,\cdot\,))}{\noparam_0} \\
        & = \sdnn((\vvgpidx, \ell), \noparam_0) + \dderivat{\llaparam(\omega) - \noparam_0}{\sdnn((\vvgpidx, \ell), \,\cdot\,)}{\noparam_0} \\
        & = \sdnnlin((\vvgpidx, \ell), \llaparam(\omega)) \\
        & = \gpsym((\vvgpidx, \ell), \omega)
      \end{align*}
      for all $\vvgpidx \in \vvgpidcs$, $\ell \in \vvgpsdfctls$, and $\omega \in \Omega$.
  \end{itemize}
\end{proof}
A stronger condition, namely (norm) Fr\'echet differentiability, has been verified for Fourier neural operators mapping between $\mathrm{L}_p$ spaces \citep{Kabri2023ResolutionInvariantImageClassification}.

Note that we can also use the properties of the G\^ateaux derivative to verify the conclusions of \cref{cor:gen-pcurrying-banach}.
Since $\noparamsp$ is finite-dimensional, the G\^ateaux derivative $\gateauxat{\vvno(\vvgpidx, \,\cdot\,)}{\noparam_0}$ is continuous with respect to any TVS topology on $\vvgpoutsp$.
Hence, $\vvgp$ as defined in \cref{eqn:vvgp-gateaux} is a random process with values in $(\vvgpoutsp, \sigma(\dualsp{\vvgpoutsp})) = (\vvgpoutsp, \borelsigalg{\vvgpoutsp})$.
If $\llaparam$ has a mean $\nolaparammean$, then the mean function $\vvgpmean \colon \vvgpidcs \to \vvgpoutsp$ of $\vvgp$ is given by
\begin{equation*}
  \vvgpmean(\vvgpidx) = \vvno(\vvgpidx, \noparam_0) + \gateauxat{\vvno(\vvgpidx, \,\cdot\,)}{\noparam_0}(\nolaparammean - \noparam_0),
\end{equation*}
and, if $\llaparam$ has a covariance matrix $\nolaparamcov$, then the covariance function $\vvgpcov \colon \vvgpidcs \times \vvgpidcs \to (\dualsp{\vvgpoutsp} \to \vvgpoutsp)$ of $\vvgp$ is given by
\begin{align*}
  \vvgpcov(\vvgpidx_1, \vvgpidx_2)
  = \gateauxat{\vvno(\vvgpidx_1, \,\cdot\,)}{\noparam_0} \nolaparamcov \gateauxat{\vvno(\vvgpidx_2, \,\cdot\,)}{\noparam_0}\dualop.
\end{align*}
By the closure properties of Banach-valued Gaussian random variables under continuous affine maps \citep[Lemma 2.2.2]{Bogachev1998GaussianMeasures}, $\vvgp$ is a Gaussian process if $\llaparam$ is Gaussian.

While this construction is somewhat more direct, the currying approach outlined above mimics more closely how $\vvgp$ is constructed on a computer, especially when $\vvgpoutsp \subset (\R^{\nooutspcodomdim})^{\nooutspdom}$ and $\vvgpsdfctls = \set{\delta_{\nooutpt, i} \where \nooutpt \in \nooutspdom, i = 1, \dotsc, \nooutspcodomdim}$ as in \cref{sec:no-fvgp}, and does not require knowledge of Banach-valued derivatives.

\subsubsection{Bidual Random Processes}
\label{sec:bidual-rproc}
If there is no $\vvgp \colon \vvgpidcs \times \Omega \to \vvgpoutsp$ with $\ell(\vvgp(\vvgpidx, \omega)) = \gpsym((\vvgpidx, \ell), \omega)$ for all $\vvgpidx \in \vvgpidcs$, $\ell \in \vvgpsdfctls$, and $\prob{}$-almost all $\omega \in \Omega$, we can construct a weaker version of $\vvgp$.
Note that, if $\ell = \alpha_1 \ell_1 + \alpha_2 \ell_2 \in \vvgpsdfctls$, then $\gpsym((\vvgpidx, \ell), \omega) = \alpha_1 \gpsym((\vvgpidx, \ell_1), \omega) + \alpha_2 \gpsym((\vvgpidx, \ell_2), \omega)$.
This means that $\gpsym((\vvgpidx, \,\cdot\,), \omega)$ can always be uniquely linearly extended to $\vvgpfctls = \operatorname{span} \vvgpsdfctls$.
Define
\begin{equation*}
  \tilde{\vvgp} \colon \vvgpidcs \times \Omega \to \adualsp{\vvgpfctls},
  (\vvgpidx, \omega) \mapsto (\ell \mapsto \gpsym((\vvgpidx, \ell), \omega)).
\end{equation*}
We can use \cref{cor:pcurrying} to show that $\tilde{\vvgp}$ is a random process with values in the algebraic dual $\vvgpfctls^\#$ of $\vvgpfctls$ equipped with the smallest $\sigma$-algebra $\sigma(\delta_{\adualsp{\vvgpfctls}})$ that makes all point evaluation functionals measurable.
We refer to such random processes as \emph{bidual random processes}.

Recall the bidual embedding $\iota_{\vvgpoutsp,\vvgpfctls^\#}$ from \cref{rmk:bidual-embedding}.
If $\vvgp$ exists, then $\tilde{\vvgp}(\vvgpidx, \omega) = \iota_{\vvgpoutsp,\vvgpfctls^\#}(\vvgp(\vvgpidx, \omega))$.

\subsection{Operator-Valued Gaussian Processes as Hilbert-Valued Gaussian Processes}
Finally, we show that \emph{operator-valued Gaussian processes} \citep{OWHADI2023133592,batlle2024kernel,MORA2025117581} can be embedded in the theoretical framework outlined above.
Since operator-valued Gaussian processes are only defined on separable Hilbert spaces, in this section, we let $\vvgpidcs, \vvgpoutsp$ be separable Hilbert spaces with inner products $\inprod{\cdot}{\cdot}_{\vvgpidcs}$ and $\inprod{\cdot}{\cdot}_{\vvgpoutsp}$, respectively.
In this case, the continuous dual $\dualsp{\vvgpoutsp}$ is identified with the primal space $\vvgpoutsp$ via the Riesz isomorphism.
We start by reviewing the building blocks of operator-valued Gaussian processes.

\begin{definition}[Operator-Valued Kernel {\citep[Definition 9.1]{OWHADI2023133592}}]
  A function $\vvgpcov \colon \vvgpidcs \times \vvgpidcs \to (\vvgpoutsp \to \vvgpoutsp)$ is called an \emph{operator-valued kernel} if $\vvgpcov(\vvgpidx_1, \vvgpidx_2)$ is a bounded linear operator with $\vvgpcov(\vvgpidx_1, \vvgpidx_2) = \vvgpcov(\vvgpidx_2, \vvgpidx_1)\adj$ for all $\vvgpidx_1, \vvgpidx_2 \in \vvgpidcs$ and $\sum_{i,j = 1}^m \inprod{\vvgpout_i}{\vvgpcov(\vvgpidx_i, \vvgpidx_j) \vvgpout_j}_\vvgpoutsp \ge 0$ for all $m \in \N$, $\vvgpidx_1, \dotsc, \vvgpidx_m \in \vvgpidcs$, and $\vvgpout_1, \dotsc, \vvgpout_m \in \vvgpoutsp$.
\end{definition}

\begin{definition}[Gaussian Hilbert Space {\citep[Definition 7.1]{Owhadi2019OperatorAdaptedWaveletsFast}}]
  A closed subspace $\hilbertsp$ of $\mathrm{L}_2(\Omega, \sigalg, \prob{})$ is called a \emph{Gaussian Hilbert space} if every $h \in \hilbertsp$ is a univariate Gaussian random variable.
\end{definition}

\begin{definition}[Operator-Valued Gaussian Process {\citep[Definition 5.1]{OWHADI2023133592}}]
  Let $\vvgpmean \colon \vvgpidcs \to \vvgpoutsp$, $\vvgpcov \colon \vvgpidcs \times \vvgpidcs \to (\vvgpoutsp \to \vvgpoutsp)$ an operator-valued kernel, and $\hilbertsp$ a Gaussian Hilbert space over $(\Omega, \sigalg, \prob{})$.
  A function $\Xi \colon \vvgpidcs \to (\vvgpoutsp \to \hilbertsp)$ is called an \emph{operator-valued Gaussian process} if $\Xi(\vvgpidx)$ is a bounded linear operator for all $\vvgpidx \in \vvgpidcs$.
  $\Xi$ is said to have mean $\vvgpmean$ and covariance kernel $\vvgpcov$ if $\Xi(\vvgpidx)(\vvgpout) \sim \gaussian{\inprod{\vvgpout}{\vvgpmean(\vvgpidx)}_{\vvgpoutsp}}{\inprod{\vvgpout}{\vvgpcov(\vvgpidx, \vvgpidx)(\vvgpout)}_{\vvgpoutsp}}$ for all $\vvgpidx \in \vvgpidcs$ and $\vvgpout \in \vvgpoutsp$, and
  \begin{equation*}
    \covariance[\prob{}]{\Xi(\vvgpidx_1)(\vvgpout_1)}{\Xi(\vvgpidx_2)(\vvgpout_2)}
    = \inprod{\vvgpout_1}{\vvgpcov(\vvgpidx_1, \vvgpidx_2)(\vvgpout_2)}_{\vvgpoutsp}.
  \end{equation*}
\end{definition}

Generally, operator-valued Gaussian processes are ``equivalent to'' a subset of Gaussian processes with values in $\adualsp{(\dualsp{\vvgpoutsp})} = \adualsp{\vvgpoutsp}$ (see \cref{sec:bidual-rproc}).
\begin{proposition}
  Let $\tilde{\vvgp} \colon \vvgpidcs \times \Omega \to \adualsp{\vvgpoutsp}$, $\hilbertsp \subset \mathrm{L}_2(\Omega, \sigalg, \prob{})$ a Gaussian Hilbert space, and $\Xi \colon \vvgpidcs \to (\vvgpoutsp \to \hilbertsp)$ such that $\tilde{\vvgp}(\vvgpidx, \,\cdot\,)(\vvgpout) \in \Xi(\vvgpidx)(\vvgpout)$ for all $\vvgpidx \in \vvgpidcs$ and $\vvgpout \in \vvgpoutsp$.
  Then $\Xi$ is an operator-valued Gaussian process with mean function $\vvgpmean \colon \vvgpidcs \to \vvgpoutsp$ and covariance kernel $\vvgpcov \colon \vvgpidcs \times \vvgpidcs \to (\vvgpoutsp \to \vvgpoutsp)$ if and only if
  \begin{enumerate}[label=(\alph*)]
    \item $\tilde{\vvgp}$ is a $(\adualsp{\vvgpoutsp}, \sigma(\delta_{\vvgpoutsp}))$-valued Gaussian process with mean $\tilde{\vvgpmean} \colon \vvgpidcs \to \dualsp{\vvgpoutsp}$ and covariance function $\tilde{\vvgpcov} \colon \vvgpidcs \times \vvgpidcs \to (\delta_{\vvgpoutsp} \to \dualsp{\vvgpoutsp})$,
    \item $(\vvgpout, \omega) \mapsto \tilde{\vvgp}(\vvgpidx, \omega)(\vvgpout)$ is mean-square continuous\footnote{$\tilde{\vvgp}(\vvgpidx, \,\cdot\,)(\vvgpout) \xrightarrow{\mathrm{L}_2} \tilde{\vvgp}(\vvgpidx, \,\cdot\,)(\vvgpout_0)$ as $\vvgpout \xrightarrow{\vvgpoutsp} \vvgpout_0$} for all $\vvgpidx \in \vvgpidcs$, and
    \item $\vvgpout \mapsto \tilde{\vvgpcov}(\vvgpidx_1, \vvgpidx_2)(\delta_\vvgpout) \in \dualsp{\vvgpoutsp}$ is norm-continuous for all $\vvgpidx_1, \vvgpidx_2 \in \vvgpidcs$.
  \end{enumerate}
  We have $\inprod{\vvgpmean(\vvgpidx)}{\,\cdot\,}_{\vvgpoutsp} = \tilde{\vvgpmean}(\vvgpidx)$ for $\vvgpidx \in \vvgpidcs$, and $\inprod{\vvgpcov(\vvgpidx_1, \vvgpidx_2)(\vvgpout)}{\,\cdot\,}_{\vvgpoutsp} = \tilde{\vvgpcov}(\vvgpidx_1, \vvgpidx_2)(\delta_{\vvgpout})$ for all $\vvgpidx_1, \vvgpidx_2 \in \vvgpidcs$ and $\vvgpout \in \vvgpoutsp$.
\end{proposition}
\begin{proof}
  This follows from \cref{cor:pcurrying}.
\end{proof}

Moreover, the following results show that operator-valued Gaussian processes whose kernels map into the trace-class are ``equivalent to'' Gaussian processes with values in $(\vvgpoutsp, \borelsigalg{\vvgpoutsp})$.

\begin{proposition}
  Let $\vvgp$ be a $\vvgpoutsp$-valued Gaussian process on $(\Omega, \sigalg, \prob{})$ with index set $\vvgpidcs$, mean function $\vvgpmean$, and covariance function $\vvgpcov$.
  Define $\hilbertsp$ as the $\mathrm{L}_2(\Omega, \sigalg, \prob{})$ closure of $\operatorname{span} \set{\omega \mapsto \inprod{\vvgpout}{\vvgp(\vvgpidx, \omega)}_{\vvgpoutsp} \suchthat \vvgpidx \in \vvgpidcs, \vvgpout \in \vvgpoutsp} \subset \mathrm{L}_2(\Omega, \sigalg, \prob{})$.
  Then
  \begin{equation*}
    \Xi \colon \vvgpidcs \to (\vvgpoutsp \to \hilbertsp),
    \vvgpidx \mapsto (\vvgpout \mapsto \inprod{\vvgpout}{\vvgp(\vvgpidx, \cdot)}_{\vvgpoutsp})
  \end{equation*}
  is an operator-valued Gaussian process with mean $\vvgpmean$ and covariance kernel $\vvgpcov$.
  Moreover, $\vvgpcov(\vvgpidx, \vvgpidx)$ is trace-class for all $\vvgpidx \in \vvgpidcs$.
\end{proposition}
\begin{proof}
  Let $\vvgpidx \in \vvgpidcs$.
  For all $\vvgpout \in \vvgpoutsp$ we have
  \begin{equation*}
    \norm{\Xi(\vvgpidx)(\vvgpout)}_{\hilbertsp}^2
    = \int_\Omega \left( \inprod{\vvgpout}{\vvgp(\vvgpidx, \cdot)} \right)^2 \prob{\diff{\omega}}
    \le \norm{\vvgpout}_{\vvgpoutsp}^2 \underbrace{\int_\Omega \norm{\vvgp(\vvgpidx, \cdot)}_{\vvgpoutsp}^2 \prob{\diff{\omega}}}_{< \infty}
  \end{equation*}
  by the Cauchy-Schwarz inequality and Fernique's theorem \citep[Theorem 2.8.5]{Bogachev1998GaussianMeasures}, i.e. $\Xi(\vvgpidx)$ is a bounded linear operator.
\end{proof}

\begin{theorem}
  Let $\Xi$ be an operator-valued Gaussian process with Gaussian Hilbert space $\hilbertsp \subset \mathrm{L}_2(\Omega, \sigalg, \prob{})$, mean function $\vvgpmean$, and covariance kernel $\vvgpcov$ such that $\vvgpcov(\vvgpidx, \vvgpidx)$ is trace class for all $\vvgpidx \in \vvgpidcs$.
  Then there is a $\banachsp$-valued Gaussian process $\vvgp \sim \gp{\vvgpmean}{\vvgpcov}$ on $(\Omega, \sigalg, \prob{})$ such that $\inprod{\vvgpout}{\vvgp(\vvgpidx)}_{\vvgpoutsp} \in \Xi(\vvgpidx)(\vvgpout)$ for all $\vvgpidx \in \vvgpidcs$ and $\vvgpout \in \vvgpoutsp$.
\end{theorem}
\begin{proof}
  First, we will construct $\vvgp \colon \vvgpidcs \times \Omega \to \vvgpoutsp$.
  Let $\vvgpidx \in \vvgpidcs$.
  Since $\vvgpoutsp$ is separable and $\vvgpcov(\vvgpidx, \vvgpidx)$ is self-adjoint, positive-semidefinite, and trace class (and hence compact), there is an ONB $\set{\psi_i^\vvgpidx}_{i \in I}$ of $\vvgpoutsp$ consisting of eigenvectors of $\vvgpcov(\vvgpidx, \vvgpidx)$ \citep[Corollary 5.4]{Conway1997FunctionalAnalysis}. %
  For every $i \in I$, fix $\rvar{z}_{i}^\vvgpidx \in (\Xi(\vvgpidx)(\psi_i^\vvgpidx) - \inprod{\psi_i^\vvgpidx}{\vvgpmean(\vvgpidx)})$.
  We will now show that the series
  \begin{equation}
    \label{eqn:karhunen-loeve-series}
    \sum_{i \in I} \rvar{z}_{i}^\vvgpidx \psi_i^\vvgpidx
  \end{equation}
  converges $\prob{}$-almost surely in $\vvgpoutsp$.
  The $\set{\rvar{z}_{i}^\vvgpidx}_{i \in I}$ are centered, independent Gaussian random variables with variances given by the eigenvalues $\lambda^\vvgpidx_i$ of $\vvgpcov(\vvgpidx, \vvgpidx)$, since
  \begin{equation*}
    \expectation[\prob{}]{\rvar{z}_{i}^\vvgpidx}
    = \expectation[\prob{}]{\Xi(\vvgpidx)(\psi_i^\vvgpidx)} - \inprod{\psi_i^\vvgpidx}{\vvgpmean(\vvgpidx)}
    = 0
  \end{equation*}
  and
  \begin{equation*}
    \covariance[\prob{}]{\rvar{z}_{i}^\vvgpidx}{\rvar{z}_{j}^\vvgpidx}
    = \covariance[\prob{}]{\Xi(\vvgpidx)(\psi_i^\vvgpidx)}{\Xi(\vvgpidx)(\psi_j^\vvgpidx)}
    = \inprod{\psi_i^\vvgpidx}{\vvgpcov(\vvgpidx, \vvgpidx)(\psi_j^\vvgpidx)}_{\vvgpoutsp}
    = \inprod{\psi_i^\vvgpidx}{\lambda^\vvgpidx_j \psi_j^\vvgpidx}_{\vvgpoutsp}
    = \lambda^\vvgpidx_j \delta_{ij}.
  \end{equation*}
  We have $\sum_{i \in I} \lambda^\vvgpidx_i < \infty$ because the operator $\vvgpcov(\vvgpidx, \vvgpidx)$ is trace-class.
  By Theorem 1.1.4 in \citet{Bogachev1998GaussianMeasures}, this means that
  \begin{equation*}
    \sum_{i \in I} \rvar{z}_{i}^\vvgpidx = \sum_{i \in I} \rvar{z}_{i}^\vvgpidx \norm{\psi_i^\vvgpidx}_{\vvgpoutsp}
  \end{equation*}
  converges $\prob{}$-almost surely, and hence \cref{eqn:karhunen-loeve-series} $\prob{}$-almost surely absolutely convergent in $\vvgpoutsp$.
  Define
  \begin{equation*}
    \vvgp(\vvgpidx) \stackrel{\mathrm{a.s.}}{\defeq} \vvgpmean(\vvgpidx) + \sum_{i \in I} \rvar{z}_{i}^\vvgpidx \psi_i^\vvgpidx.
  \end{equation*}
  Then $\inprod{\vvgpout}{\vvgp(\vvgpidx)} \in \Xi(\vvgpidx)(\vvgpout)$ for $\vvgpout \in \operatorname{span} \set{\psi_i^\vvgpidx}_{i \in I}$.

  We will now show that $\inprod{\vvgpout}{\vvgp(\vvgpidx)} \in \Xi(\vvgpidx)(\vvgpout)$ for all $\vvgpidx \in \vvgpidcs$ and $\vvgpout \in \vvgpoutsp$.
  To this end, fix $\vvgpout \in \vvgpoutsp$ and $\rvar{y}_{\vvgpidx,\vvgpout} \in \Xi(\vvgpidx)(\vvgpout)$.
  There is $\set{\vvgpout_n}_{n = 1}^\infty \subset \operatorname{span} \set{\psi_i^\vvgpidx}_{i \in I}$ such that $\vvgpout = \lim_{n \to \infty} \vvgpout_n$.
  Boundedness of $\Xi(\vvgpidx)$ implies $\inprod{\vvgpout_n}{\vvgp(\vvgpidx)} \xrightarrow{\mathrm{L}^2} \rvar{y}_{\vvgpidx,\vvgpout}$ and, by Remark 6.11 and Corollary 6.13 in \citet{Klenke2014Probability}, there exists a subsequence $\set{\vvgpout_{n_k}}_{k = 1}^\infty \subset \set{\vvgpout_n}_{n = 1}^\infty$ such that $\inprod{\vvgpout_{n_k}}{\vvgp(\vvgpidx)} \xrightarrow{\mathrm{a.s.}} \rvar{y}_{\vvgpidx,\vvgpout}$.
  Moreover, by continuity of the inner product, we know that $\inprod{\vvgpout_{n_k}}{\vvgp(\vvgpidx, \omega)} \to \inprod{\vvgpout}{\vvgp(\vvgpidx, \omega)}$ for all $\omega \in \Omega$, and hence $\inprod{\vvgpout_{n_k}}{\vvgp(\vvgpidx)} \xrightarrow{\mathrm{a.s.}} \inprod{\vvgpout}{\vvgp(\vvgpidx)}$.
  Since almost sure limits are unique up to almost sure equality \citep{Klenke2014Probability}, we have $\rvar{y}_{\vvgpidx,\vvgpout} \stackrel{\mathrm{a.s.}}{=} \inprod{\vvgpout}{\vvgp(\vvgpidx)}$.
  Hence, $\inprod{\vvgpout}{\vvgp(\vvgpidx)} \in \Xi(\vvgpidx)(\vvgpout)$.

  The rest of the claim now follows from \cref{cor:gen-pcurrying-banach} with $\gpsym((\vvgpidx, \vvgpout), \omega) \defeq \inprod{\vvgpout}{\vvgp(\vvgpidx, \omega)}$ and $\vvgpsdfctls = \dualsp{\vvgpoutsp}$.
\end{proof}

\section{Linearized Laplace Approximation}
\label{sec:appendix-laplace}
The \emph{linearized Laplace approximation} (LLA) \citep{mackay1992bayesian,mackay1992evidence,immer2020improving} is a conceptually simple, yet effective \cite{daxberger2021laplace} method for obtaining an approximate posterior distribution over the parameters $\dnnparam \in \R^{\dnnparamdim}$ of a neural network $\dnn \colon \R^\dnnindim \times \R^\dnnparamdim \to \R^{\dnnoutdim}$.
It applies whenever the objective function $R$ used to train the neural network is (equivalent to) a negative log-posterior
\begin{equation*}
  R(\dnnparam)
  = -\log p(\dnnparam \mid \ds)
  = - \log p(\dnnparam) - \sum_{i = 1}^n \log p(\dstarget{i} \mid \dnn(\dsinput{i}, \dnnparam)) + \text{const.}
\end{equation*}
of the network parameters given data $\ds = \set{(\dsinput{i}, \dstarget{i})}_{i = 1}^n$.
It is common for the prior over the parameters to be Gaussian, in which case $-\log p(\dnnparam)$ acts as an $\mathrm{L}2$-regularizer on the parameters.
During training, we attempt to find a local minimum $\dnnparammap$ of the objective function $R$, i.e. a maximum a-posteriori (MAP) estimator of the network parameters given the data.

Following \citet{immer2020improving}, we approximate the posterior of the network weights as follows:
First, we linearize the model using a first-order Taylor approximation in the weights around the MAP estimator $\dnnparammap$
\begin{equation*}
  \dnn(\dnninput, \dnnparam)
  \approx \dnnlin(\dnninput, \dnnparam)
  \defeq \dnn(\dnninput, \dnnparammap) + \jacobian[\dnnparam]{\dnn(\dnninput, \dnnparam)}[\dnnparammap] (\dnnparam - \dnnparammap).
\end{equation*}
Afterwards, we compute a second-order Taylor approximation of the negative log-posterior $\dnnrisklin$ of the linearized network at the MAP $\dnnparammap$
\begin{align}
  \dnnrisklin(\dnnparam)
  & \defeq - \log p(\dnnparam) - \sum_{i = 1}^n \log p(\dstarget{i} \mid \dnnlin(\dsinput{i}, \dnnparam)) + \text{const.} \nonumber \\
  & \approx \dnnrisk(\dnnparammap) + \underbrace{\gradient{R}[\dnnparammap]\T}_{\approx 0} (\dnnparam - \dnnparammap) + \frac{1}{2} (\dnnparam - \dnnparammap)\T \llaprec (\dnnparam - \dnnparammap) \nonumber \\
  & = \frac{1}{2} (\dnnparam - \dnnparammap)\T \llaprec (\dnnparam - \dnnparammap) + \text{const.} \label{eqn:lla-approx-nlp}
\end{align}
with $\llaprec \defeq -\hessian[\dnnparam]{\log p(\dnnparam)}[\dnnparammap] + \ggn$, where
\begin{equation}\label{eq:ggn-la}
  \ggn
  \defeq - \sum_{i=1}^n {
    \jacobian[\dnnparam]{\dnn(\dsinput{i}, \dnnparam)}[\dnnparammap]
    \hessian[\dnn]{\log p(\dstarget{i} \mid \dnn)}[\dnn(\dsinput{i},\dnnparammap)]
    \jacobian[\dnnparam]{\dnn(\dsinput{i}, \dnnparam)}[\dnnparammap]\T
  }
\end{equation}
is the so-called \emph{generalized Gauss-Newton} (GGN) matrix \citep{schraudolph2002}.
The GGN is guaranteed to be positive-semidefinite.
\Cref{eqn:lla-approx-nlp} is the negative log-density of a (potentially degenerate) multivariate Gaussian distribution with mean $\dnnparammap$ and covariance matrix $\llacov$, i.e.
\begin{equation*}
  p(\llaparam = \dnnparam \mid \ds)
  = \exp \dnnrisk(\dnnparam)
  \approx \exp \dnnrisklin(\dnnparam)
  \approx \gaussianpdf{\dnnparam}{\dnnparammap}{\llacov}.
\end{equation*}
This Gaussian distribution is referred to as the \emph{linearized Laplace approximation} of $p(\llaparam = \dnnparam \mid \ds)$.
Under the linearized model, the approximate Gaussian posterior over the weights induces a tractable posterior predictive over the output of the neural network \citep{khan19approximate,immer2020improving}.
More precisely, using closure properties of Gaussian distributions under affine maps, one can show that the pushforward of the LLA posterior through $\dnnparam \mapsto \dnnlin(\dnninput, \dnnparam)$ defines a ($\dnnoutdim$-output) Gaussian process
\begin{equation*}
  \llagp \mid \ds
  \sim
  \gp{
    \dnn(\,\cdot\,, \dnnparammap)
  }{
    (\dnninput_1, \dnninput_2)
    \mapsto
    \jacobian[\dnnparam]{\dnn(\dnninput_1, \dnnparam)}[\dnnparammap]
    \llacov
    \jacobian[\dnnparam]{\dnn(\dnninput_2, \dnnparam)}[\dnnparammap]\T
  }.
\end{equation*}

\section{Implementation Details}
\label{sec:implementation}

\subsection{Last-Layer \nola for FNOs}
\label{sec:last-layer-luno-fno}
For an input $\noinfn \in \noinsp$, we can factorize the FNO as
\begin{equation*}
  \nosym(\noinfn, \noparam)(\nooutpt) = \underbrace{(\noprojection(\cdot, \noparam_{\noprojection}) \circ \sigma^{(L - 1)})}_{\rdefeq \tilde{\noprojection}} \left( \vec{z}^{(L - 1)}(\nooutpt, \noparam_{L - 1}) \right),
\end{equation*}
with
\begin{align*}
    \vec{z}^{(L - 1)}_i(\nooutpt, \noparam_{L - 1})
    & \defeq \sum_{j = 1}^{d_\vec{v}'} \evallinop{\fourier\inv}*{\fnoR^{(L - 1)}_{:ij} \odot \hat{\nohiddenstate}^{(L - 1)}_{:j}}(\nooutpt) + \sum_{j = 1}^{d_\vec{v}'} \fnoW_{ij}^{(L - 1)} \nohiddenstate^{(L - 1)}_j(\nooutpt) \\
    & = \sum_{j = 1}^{d_\vec{v}'} \sum_{k = 1}^{k_\text{max}} \Real(\fnoR^{(L - 1)}_{kij}) \underbrace{\Real(\hat{\nohiddenstate}^{(L - 1)}_{kj}) \cos \left( \inprod{\omega_k}{\nooutpt} \right)}_{\rdefeq \phi_{kj}(\nooutpt)} \\
  & \quad  + \sum_{j = 1}^{d_\vec{v}'} \sum_{k = 1}^{k_\text{max}} \Imag(\fnoR^{(L - 1)}_{kij}) \underbrace{(-1) \Imag(\hat{\nohiddenstate}^{(L - 1)}_{kj}) \sin \left( \inprod{\omega_k}{\nooutpt} \right)}_{\rdefeq \varphi_{kj}(\nooutpt)} \\
  & \quad   + \sum_{j = 1}^{d_\vec{v}'} \fnoW_{ij}^{(L - 1)} \underbrace{\nohiddenstate^{(L - 1)}_j(\nooutpt)}_{\rdefeq \psi_j(\nooutpt)},
\end{align*}
where $\hat{\nohiddenstate}^{(L - 1)}_{kj} \defeq \evallinop{\fourier}{\nohiddenstate^{(L - 1)}_j}_k \in \C$ for $k \in \set{1, \dotsc, k_\text{max}}$.
We note that $\vec{z}^{(L - 1)}(\nooutpt, \noparam_{L - 1})$ is linear in $\noparam_{L - 1}$.
Thus, assuming a Gaussian belief over 
\[\noparam_{L - 1} \cong (\Real(\fnoR^{(L - 1)}), \Imag(\fnoR^{(L - 1)}), \fnoW^{(L - 1)})\] 
induces a (multi-output) GP over $\vec{z}^{(L - 1)}$:
\begin{align*}
    \rvec{z}^{(L - 1)} &\sim \gp{\mogpmean_{\rvec{z}^{(L - 1)}}}{\mogpcov_{\rvec{z}^{(L - 1)}}} \qquad \text{with}\\
    \mogpmean_{\rvec{z}^{(L - 1)}} &= \vec{z}^{(L - 1)}(\nooutpt, \noparam_{L - 1}^\star).
\end{align*}
Moreover, $\rvec{z}^{(L - 1)}$ is the sum of three (dependent) parametric Gaussian processes with feature functions $\phi_{kj}$, $\varphi_{kj}$, and $\psi_j$, respectively.
Consequently, the function-valued GP induced by the linearized FNO is given by
\begin{equation*}
    \fvgpsym(\noinfn)(\nooutpt)
    = \tilde{\noprojection}(\mogpmean_{\rvec{z}^{(L - 1)}}(\nooutpt)) + \jacobian{\tilde{\noprojection}}[\mogpmean_{\rvec{z}^{(L - 1)}}(\nooutpt)] (\rvec{z}^{(L - 1)}(\nooutpt) - \mogpmean_{\rvec{z}^{(L - 1)}}(\nooutpt)),
\end{equation*}
i.e. $\fvgpsym(\noinfn) \sim \gp{\mogpmean_\noinfn}{\mogpcov_\noinfn}$ with
\begin{align*}
  \mogpmean_\noinfn(\nooutpt)
  & = \nosym(\noinfn, \noparam^\star)(\nooutpt), \quad \text{and}\\
  \mogpcov_\noinfn(\nooutpt_1, \nooutpt_2)
  & = \jacobian{\tilde{\noprojection}}[\mogpmean_{\rvec{z}^{(L - 1)}}(\nooutpt_1)] \mogpcov_{\rvec{z}^{(L - 1)}}(\nooutpt_1, \nooutpt_2) \jacobian{\tilde{\noprojection}}[\mogpmean_{\rvec{z}^{(L - 1)}}(\nooutpt_2)]\T.
\end{align*}
If the input function $\noinfn \in \noinsp$ is discretized on a grid $\mat{X}_{\noinsp}^{(i)} \in (\noinspdom)^{n_\noinsp^{(i)}}$, we set $\hat{\nohiddenstate}^{(L - 1)}_{kj} \defeq \evallinop{\rfft}{\nohiddenstate^{(L - 1)}_j(\mat{X}_{\noinsp}^{(i)})}_k$ $\in \C$ and $\psi_j(\nooutpt)$ interpolates $\nohiddenstate^{(L - 1)}_j(\mat{X}_{\noinsp}^{(i)})$ (e.g. spline interpolation or Fourier interpolation).

\clearpage
\section{Experimental details}
\label{sec:appendix-experiments}

\subsection{Data: PDE trajectories}
\subsubsection{Low Data Generation using APEBench}
To evaluate the performance of the uncertainty quantification methods discussed, we utilize the code in the APEBench for generating data from Burgers', Hyper Diffusion and Kuramoto-Sivashinsky equation (conservative) (cf. \cite{koehler2024apebenchbenchmarkautoregressiveneural} for more details). \Cref{tab:apebench_data} summarizes the characteristics of the datasets we use, the number of trajectories for training, and testing, as well as the spatial and temporal resolutions.
\begin{table}[ht]
    \small
    \centering
    \begin{tabular}{|l|c|c|c|c|c|c|}
        \hline
        \textbf{PDE Name} & \textbf{Dimensions} & \textbf{Training Traj.} & \textbf{Valid. Traj.} & \textbf{Test Traj.} & \textbf{Spatial Res.} & \textbf{Temp. Res.} \\
        \hline
        Burgers & 1D & 25 & 250 & 250 & $256$ & $59$ \\
        \hline
        Hyper Diffusion & 1D & 25 & 250 & 250 & $256$ & $59$ \\
        \hline
        Kuramoto-Sivashinsky (cons.) & 1D & 25 & 250 & 250 & $256$ & $59$ \\
        \hline
    \end{tabular}
    \caption{Summary of PDE datasets generated using APEBench.}
    \label{tab:apebench_data}
\end{table}

\subsubsection{Out-of-Distribution Data Generation}
We generated additional datasets based on the advection-diffusion-reaction equation 
\begin{equation}
    \frac{\partial u}{\partial t} + \mathbf{v} \cdot \nabla u = \alpha \nabla^2 u + R,
\end{equation}
to evaluate the robustness of the uncertainty quantification methods under out-of-distribution (OOD) scenarios. Here $u$ represents the scalar field (e.g., concentration or temperature), $\mathbf{v}$ is the velocity field, $\alpha$ is the diffusion coefficient, and $R$ is the reaction term, where $\alpha=0.026$ was held constant throughout the datasets.
The following specifications guided this process:
\begin{itemize}
    \item \textbf{Datasets:} Trajectories for five variations of the advection-diffusion-reaction equation were generated (cf. \Cref{fig:ood_dataset}):
    \begin{enumerate}
        \item Base
        \begin{itemize}
            \item Random number (1-10) of Gaussian blobs as an initial condition.
            \item Constant random velocity field.
            \item No reaction terms.
        \end{itemize}
        \item Flip
        \begin{itemize}
            \item Random number (1-10) of Gaussian blobs as an initial condition.
            \item Constant random velocity field, which is reversed at the center of the domain.
            \item No reaction terms.
        \end{itemize}
        \item Pos
        \begin{itemize}
            \item Random number (1-10) of Gaussian blobs as an initial condition.
            \item Constant random velocity field.
            \item Randomly placed triangular heat source.
        \end{itemize}
        \item Pos-Neg
        \begin{itemize}
            \item Random number (1-10) of Gaussian blobs as an initial condition.
            \item Constant random velocity field.
            \item Randomly placed triangular heat source; randomly placed cloud-shaped heat sink.
        \end{itemize}
        \item Pos-Neg-Flip
        \begin{itemize}
            \item Random number (1-10) of Gaussian blobs as an initial condition.
            \item Constant random velocity field, which is reversed at the center of the domain.
            \item Randomly placed triangular heat source; randomly placed cloud-shaped heat sink.
        \end{itemize}
    \end{enumerate}
    \item \textbf{PDE Solver:} We utilized a custom implementation to solve the advection-diffusion-reaction equation, relying on a 9-point stencil with Runge-Kutta 4 (RK4) with fine spatial and temporal resolutions.
    \item \textbf{Simulation Parameters:}
    \begin{itemize}
        \item Spatial resolution: $100 \times 100$ grid.
        \item Temporal resolution: $\Delta t = 5 \times 10^{-10}$ over 200-time steps, from which 59 are sub-sampled.
    \end{itemize}
\end{itemize}

We train a Fourier neural operator on 1000 training trajectories of the Base dataset and calibrate uncertainty quantification methods - if necessary - on a corresponding validation dataset with 250 input-output pairs. For the evaluation, we consider 250 random test pairs from some of the datasets. In \Cref{fig:ood_dataset}, we depict a single trajectory for each of these 5 datasets generated. To seamlessly evaluate the various OOD datasets, we pad the training set with constant zeros for the velocity field and reaction terms as a placeholder.

\begin{figure}
    \includegraphics[width=\textwidth]{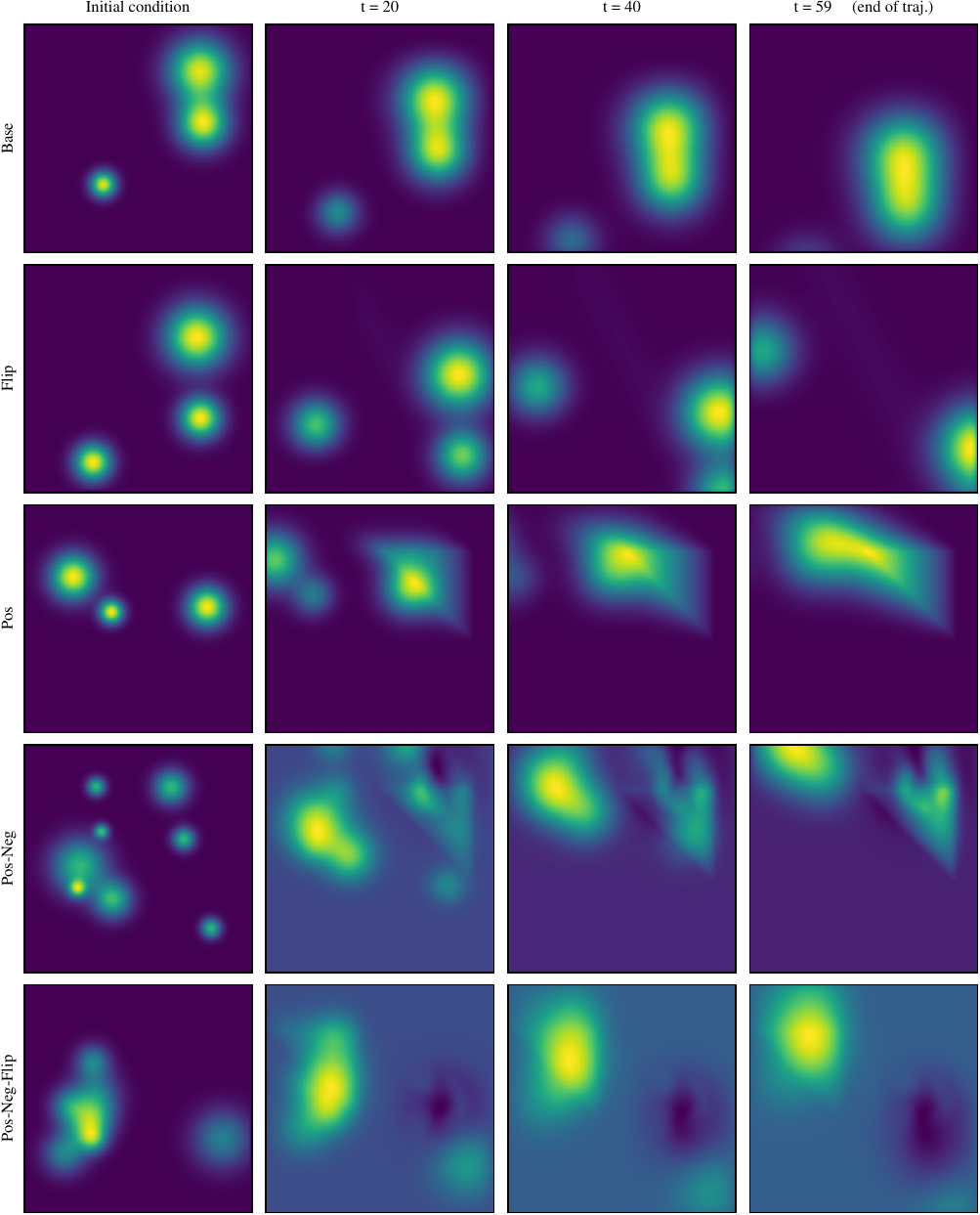}
    \caption{Initial condition and three-time steps of a single trajectory per generated dataset (Base, Flip, Pos, Pos-Neg, Pos-Neg-Flip).}
    \label{fig:ood_dataset}
\end{figure}

\subsection{Model and training}

For all experiments, we consider the original Fourier neural operator architecture \cite{Li2020FourierNeuralOperator} with the hyperparameter suggestions following \cite{koehler2024apebenchbenchmarkautoregressiveneural}, i.e. 12 modes (per spatial dimension) and 18 hidden dimensions constant throughout the network, with a total of 4 Fourier blocks. For the training, we consider 10 initial time steps and train to predict the following time step. The velocity field and the reaction term are glued to the input. Furthermore, the input is always padded by two constant zero grid points to reduce artifacts at the borders. Networks for the low data experiment are trained for 100 epochs, all remaining networks are trained for 1000 epochs---where one epoch corresponds to iterating through a single input-output pair per trajectory in the training set. During training the mean squared error loss was minimized using AdamW \cite{loshchilov2019decoupledweightdecayregularization} combined with a cosine decay learning rate scheduler with warmup. All training implementations rely on jax \cite{jax2018github}, Flax NNX and optax.

\subsection{Uncertainty quantification methods}

\subsubsection{Input Perturbations}

Input Perturbations involve augmenting input data with small, random perturbations to introduce diversity into the model's predictions. This approach exploits the model's sensitivity to input variations to approximate uncertainty in the output space. Following the approach of \citet{pathak2022fourcastnet}, ensemble predictions are generated by forwarding a batch of pointwise perturbed versions of a single input $u_n$. Perturbations are sampled as $\epsilon_{x, t} \sim \mathcal{N}(0, \sigma^2)$ for each input value of the discretized function state $u_n(x, t)$. The parameter $\sigma$ is calibrated to achieve accurate marginal uncertainty predictions.

\subsubsection{Ensembles}

Deep ensembles rely on training multiple independent instances of a model, each initialized with different random seeds and potentially trained on different subsets of the data \citep{lakshminarayanan2017simplescalablepredictiveuncertainty}. The diversity in the learned weight configurations leads to a variety of predictions for a given input, enabling the computation of both mean predictions and marginal uncertainty estimates. From a Bayesian perspective these weights represent close to true samples from the weight space posterior distribution, that can only be found through the high cost of additional training runs.

\subsubsection[Isotropic Gaussian (*-Iso)]{Isotropic Gaussian ($\ast$-Iso)}

The isotropic Gaussian covariance structure represents the weight space uncertainty as:
\begin{equation}
    \mathcal{N}(\dnnparammap, \nolaparamcov := \sigma^2 \mathbf{I})
\end{equation}
where $\sigma^2$ is the variance parameter and $\mathbf{I}$ is the identity matrix, reflecting independence and identical uncertainty across all dimensions in the weight space. To calibrate the uncertainty, we tune the parameter $\sigma^2$ as outlined below. 

From a Bayesian perspective, as seen in methods such as Laplace, this can be viewed as just considering a calibrated prior over the selected weight space.

\subsubsection[Laplace Approximation (*-LA)]{Laplace Approximation ($\ast$-LA)}

In the linearized setting, a natural choice for a posterior Gaussian belief in weight space is given by the linearized Laplace approximation, where the Hessian is given by the Generalized Gauss-Newton (GGN) $\mathbf{H}_{\text{GGN}}$ \citep{immer2020improving}. For further details, see \cref{eqn:lla-approx-nlp}. 

For high-dimensional parameter spaces, approximations of the GGN are required. Common techniques include diagonal approximations \citep{daxberger2021laplace}, K-FAC \citep{martens2020optimizing}, or low-rank approximations \citep{dangel2023vivit}. Since diagonal and K-FAC approximations do not capture correlations between weights acting on different Fourier modes, we focus on a low-rank approximation of the GGN instead.

We extend the approach in \citep{dangel2023vivit} by selecting the largest eigenspaces of the GGN and placing an isotropic Gaussian prior over all weights instead of approximating the posterior covariance directly. This allows regions of uncertainty to fall back to the prior belief. For a fixed input function and evaluation grid, the push-forward with the low-rank approximation $\mathbf{H}_{\text{GGN}} = VV^T$ yields the normal distribution:
\begin{equation}
    \mathcal{N}(\mathcal{F}(\mathbf{x}_\cdot, \boldsymbol{\theta}), \mathbf{J}_{\boldsymbol{\theta}}
    \nolaparamcov \mathbf{J}_{\boldsymbol{\theta}}^T),
\end{equation}
where $\nolaparamcov = (nVV^T + \sigma \mathbf{I})^{-1}$, $n$ is the number of input-output pairs used to train the neural operator, and $\mathbf{J}_{\boldsymbol{\theta}}$ is the Jacobian of the model with respect to the weights. In all experiments, we consider a low rank of 500. For the low data regime, we consider all input-output pairs, while for the OOD experiment only a minibatch of 1000 input-output pairs. 

\subsubsection[Sample-*]{Sample-$\ast$}

For each weight-space covariance method and the input perturbation approach, we consider a sample-based pushforward which generates an ensemble of predictions in the output space. This aligns with the way most weight-space covariance methods are introduced in the literature (cf. \cite{maddox2019simple}. In our experiments, we generate 200 samples that are propagated through the network. The empirical mean and standard deviation are then estimated using the set of predictions in the output space.

\subsubsection[\nola-*]{\nola-$\ast$}

The core implementation of $\nola$ leverages matrix-free Jacobian-vector products and the algebraic structure of the chosen Gaussian covariance matrix in weight space. Our framework supports fully lazy evaluations, enabling efficient sampling, marginal variance estimation, and matrix-vector products with the covariance of the output space.

When restricting the weight space uncertainty to only the last Fourier block, we can explicitly derive the matrix action of the Jacobian of the inverse Fast Fourier Transform (IFFT). Additionally, we exploit the fact that the final linear layer is applied pointwise in the spatial domain, significantly improving computational efficiency and reducing memory requirements compared to traditional implementations of linearized pushforwards \cite{daxberger2021laplace}.

\subsection{Evaluation}

We use the following metrics to evaluate model performance on 250 input-output pairs from the respective test trajectories. Here, $y_i$ denotes the ground truth, $\hat{y}_i$ represents the predicted mean, and $\sigma_i$ is the predicted standard deviation for the $i$-th sample. All reported numbers are the expected value over the test samples.

\subsubsection{Root Mean Squared Error (RMSE)}
The root mean squared error 
$$\text{RMSE} = \sqrt{\frac{1}{n} \sum_{i=1}^n (y_i - \hat{y}_i)^2}$$
measures the average magnitude of the errors between ensemble mean/linearized mean and ground truth values. Lower RMSE indicates better predictive accuracy, with zero being the ideal value.  

\subsubsection{Marginal negative log-likelihood (NLL}
The marginal NLL quantifies how well the predictive distribution fits the data under the assumption of Gaussian uncertainty. Lower NLL values indicate better calibration of the uncertainty estimates and higher likelihood of the observed data under the predictive model. It represents a trade-off between lower variances and how much of the error is accurately captured by the uncertainty.
$$\text{NLL} = - \sum_{i=1}^n \log \left( \frac{1}{\sqrt{2\pi \sigma_i^2}} \exp \left( -\frac{(y_i - \hat{y}_i)^2}{2\sigma_i^2} \right) \right)$$  
  
\subsubsection[Chi-squared statistic]{$\chi^2$-statistic}
The $\chi^2$-statistic measures the average squared error normalized by the predicted variance. A value close to 1 indicates well-calibrated uncertainty predictions. 
$$\text{Q} = \frac{1}{n} \sum_{i=1}^n \frac{(y_i - \hat{y}_i)^2}{\sigma_i^2}$$
Values above one suggest overconfidence, while values below one indicate underconfident predictive uncertainty. From a UQ perspective underconfidence is better than overconfidence.

\subsection{Calibration}

All hyperparameters of the discussed UQ methods (mostly $\sigma^2$ in the above definitions) were calibrated using 250 input-output pairs of the validation set to minimize the marginal negative log-likelihood. We calibrate each method's hyperparameters separately using grid search over a logarithmically spaced grid with 500 points centered around the relevant value.

\subsection{Additional results}
\subsubsection{Low data regime}

\begin{table}[h!]
\centering
\begin{tabular}{l S[table-format=1.2e1] S[table-format=2.3] S[table-format=+1.4]}
\toprule
Method & {RMSE ($\downarrow$)} & {$\chi^2$} & {NLL ($\downarrow$)} \\
\midrule
Input Perturbations & 1.98e-02 & 1.203 & -2.3927 \\
Ensemble & \bfseries 1.95e-02 & 12.674 & 1.8957 \\
Sample-Iso & 2.02e-02 & 1.237 & -2.4391 \\
\nola-Iso & 1.99e-02 & 1.155 & -2.4677 \\
Sample-LA & 2.18e-02 & 2.097 & -2.2312 \\
\nola-LA & 1.99e-02 & 0.984 & \bfseries -2.5248 \\
\bottomrule
\end{tabular}
\caption{Performance metrics comparison for UQ methods evaluated on an FNO trained on 25 trajectories of the Hyper-Diffusion equation.}
\label{tab:1d_hyperdiffusion}
\end{table}
\begin{table}[h!]
\centering
\begin{tabular}{l S[table-format=1.2e1] S[table-format=1.3] S[table-format=+1.4]}
\toprule
Method & {RMSE ($\downarrow$)} & {$\chi^2$} & {NLL ($\downarrow$)} \\
\midrule
Input Perturbations & 7.84e-02 & 0.938 & -1.0385 \\
Ensembles & \bfseries 6.82e-02 & 7.618 & 0.8489\\
Sample-Iso & 7.88e-02 & 1.211 & -1.0862 \\
\nola-Iso & 7.82e-02 & 0.922 & -1.0995 \\
Sample-LA & 1.46e-01 & 2.758 & -0.1699 \\
\nola-LA & 7.82e-02 & 1.058 & \bfseries -1.1653 \\
\bottomrule
\end{tabular}
\caption{Performance metrics comparison for UQ methods evaluated on an FNO trained on 25 trajectories of the Kuramoto-Sivashinsky (conservative) equation.}
\label{tab:1d_ks_cons}
\end{table}

\begin{figure}[!ht]
    \centering
    \begin{subfigure}{0.9\textwidth}
        \centering
        \includegraphics[width=\textwidth]{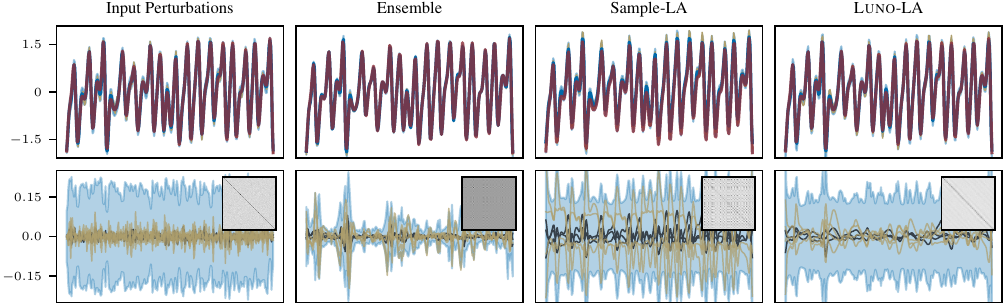}
        \caption{Kuramoto-Sivashinsky (cons).}
        \label{fig:ks_cons}
    \end{subfigure}
    
    \vspace{1em}
    
    \begin{subfigure}{0.9\textwidth}
        \centering
        \includegraphics[width=\textwidth]{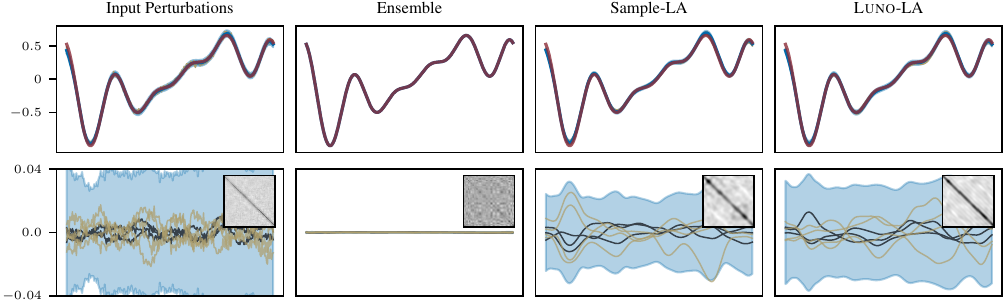}
        \caption{Hyper-Diffusion.}
        \label{fig:hyperdiffusion}
    \end{subfigure}
    
    \caption{FNO predictive uncertainty quantified by several different methods.
        Top row: target function \figlegendline{TUred},
        mean \figlegendline{TUblue} and 1.96 standard deviations \figlegendshaded{TUblue} of, as well as samples \figlegendline{TUgold} from, the predictive belief.
        For the ensemble, the samples are four of the ensemble members.
        Bottom row: spread of the predictive distribution around the mean.
        For the sample-/ensemble-based methods, we construct a Gaussian distribution from the empirical covariance matrix and draw four samples \figlegendline{TUgold}.
        We plot 1.96 standard deviations \figlegendshaded{TUblue} of the predictive belief, as well as the top-three eigenfunctions \figlegendline{TUdark} and a heatmap of the predictive covariance matrix (top right corner of panels).
    }
    \label{fig:main_figure}
\end{figure}

\clearpage
\subsubsection{Out-Of-Distribution}

\begin{table}[h!]
\centering
\begin{tabular}{l S[table-format=+1.3] S[table-format=+1.3] S[table-format=+1.3] S[table-format=+2.3] S[table-format=+3.4]}
\toprule
Method & {Base} & {Flip} & {Pos} & {Pos-Neg} & {Pos-Neg-Flip} \\
\midrule
Input Perturbations & -2.586 & 2.573 & -1.174 & 69.346 & 494.935 \\
Ensemble & \bfseries -5.313 & \bfseries -3.825 & \bfseries -4.802 & \bfseries -1.257 & \bfseries -1.014 \\
Sample-Iso & -2.921 & 4.071 & -1.226 & 9.457 & 43.362 \\
\nola-Iso & -2.892 & 3.450 & -1.260 & 7.636 & 37.733 \\
Sample-LA & -2.576 & 4.395 & -1.183 & 7.369 & 27.046 \\
\nola-LA & -2.934 & -1.126 & -1.742 & -0.818 & 1.164 \\
\bottomrule
\end{tabular}
\caption{Negative Log-Likelihood (NLL) across different OOD datasets. Lower is better.}
\label{tab:nll_comparison_table_long}
\end{table}

\begin{table}[h!]
\centering
\begin{tabular}{l S[table-format=1.2e1] S[table-format=1.3] S[table-format=+1.4]}
\toprule
Method & {RMSE ($\downarrow$)} & {$\chi^2$} & {NLL ($\downarrow$)} \\
\midrule
Input Perturbations & 1.46e-02 & 1.497 & -2.5861 \\
Ensemble & \bfseries 1.51e-03 & 0.162 & \bfseries -5.3134 \\
Sample-Iso & 1.44e-02 & 1.217 & -2.9214 \\
\nola-Iso & 1.46e-02 & 1.189 & -2.8919 \\
Sample-Iso & 1.73e-02 & 1.910 & -2.5756 \\
\nola-LA & 1.46e-02 & 0.841 & -2.9340 \\
\bottomrule
\end{tabular}
\caption{Metrics evaluated on OOD dataset Base.}
\end{table}

\begin{table}[h!]
\centering
\begin{tabular}{l S[table-format=1.2e1] S[table-format=2.3] S[table-format=+1.4]}
\toprule
Method & {RMSE ($\downarrow$)} & {$\chi^2$} & {NLL ($\downarrow$)} \\
\midrule
Input Perturbations & 4.36e-02 & 12.255 & 2.5727 \\
Ensemble-Sample & \bfseries 4.43e-03 & 0.215 & \bfseries -3.8249 \\
Sample-Iso & 4.37e-02 & 15.321 & 4.0715 \\
\nola-Iso & 4.36e-02 & 13.983 & 3.4502 \\
Sample-LA & 4.62e-02 & 15.868 & 4.3953 \\
\nola-LA & 4.36e-02 & 3.475 & -1.1257 \\
\bottomrule
\end{tabular}
\caption{Metrics evaluated on OOD dataset Flip.}
\end{table}

\begin{table}[ht!]
\centering
\begin{tabular}{l S[table-format=1.2e1] S[table-format=1.3] S[table-format=+1.4]}
\toprule
Method & {RMSE ($\downarrow$)} & {$\chi^2$} & {NLL ($\downarrow$)} \\
\midrule
Input Perturbations & 2.94e-02 & 4.292 & -1.1744 \\
Ensemble & \bfseries 4.93e-03 & 0.549 & \bfseries -4.8023 \\
Sample-Iso & 2.91e-02 & 4.552 & -1.2261 \\
\nola-Iso & 2.94e-02 & 4.408 & -1.2596 \\
Sample-LA & 2.79e-02 & 4.622 & -1.1835 \\
\nola-LA & 2.94e-02 & 3.001 & -1.7416 \\
\bottomrule
\end{tabular}
\caption{Metrics evaluated on OOD dataset Pos.}
\end{table}

\begin{table}[ht!]
\centering
\begin{tabular}{l S[table-format=1.2e1] S[table-format=3.3] S[table-format=+2.4]}
\toprule
Method & {RMSE ($\downarrow$)} & {$\chi^2$} & {NLL ($\downarrow$)} \\
\midrule
Input Perturbations & 5.55e-02 & 145.561 & 69.3458 \\
Ensemble & 9.79e-02 & 1.161 & \bfseries-1.2569 \\
Sample-Iso & 5.55e-02 & 25.961 & 9.4566 \\
\nola-Iso & \bfseries 5.52e-02 & 22.231 & 7.6355 \\
Sample-LA & 5.94e-02 & 21.685 & 7.3688 \\
\nola-LA & \bfseries 5.52e-02 & 4.182 & -0.8180 \\
\bottomrule
\end{tabular}
\caption{Metrics evaluated on OOD dataset Pos-Neg.}
\end{table}

\begin{table}[ht!]
\centering
\begin{tabular}{l S[table-format=1.2e1] S[table-format=3.3] S[table-format=+3.4]}
\toprule
Method & {RMSE ($\downarrow$)} & {$\chi^2$} & {NLL ($\downarrow$)} \\
\midrule
Input Perturbations & \bfseries 1.10e-01 & 997.143 & 494.9350 \\
Ensemble & 1.39e-01 & 1.006 & \bfseries -1.0140 \\
Sample-Iso & \bfseries 1.10e-01 & 93.712 & 43.3620 \\
Prior-Iso & \bfseries 1.10e-01 & 82.367 & 37.7331 \\
Sample-LA & 1.15e-01 & 60.728 & 27.0460 \\
\nola-LA & \bfseries 1.10e-01 & 7.127 & 1.1642 \\
\bottomrule
\end{tabular}
\caption{Metrics evaluated on OOD dataset Pos-Neg-Flip.}
\end{table}

\clearpage
\subsubsection{Evaluation of a single trajectory}

\textbf{Run-Time comparison}

\begin{table}[h]
    \centering
    \begin{tabular}{lcc}
        \toprule
        Method & \nola-$\ast$ & Sample-$\ast$ \\
        \midrule
        Input Perturbations & -- & $10.19 \pm 0.006$ \\
        Ensemble & -- & $0.85 \pm 0.004 $ \\
        $\ast$-Iso & $0.53 \pm 0.004$ & $14.00 \pm 0.008$ \\
        $\ast$-LA & $5.75 \pm 0.017$ & $27.70 \pm 0.006$ \\
        \bottomrule
    \end{tabular}
    \caption{Comparison of run-time performance between sampling-based (Sample-$\ast$) and linearization-based (\nola-$\ast$) methods across different uncertainty quantification techniques when out-rolling a single trajectory iteratively.}
    \label{tab:runtime_comparison}
\end{table}

\begin{figure}
\includegraphics[width=\textwidth]{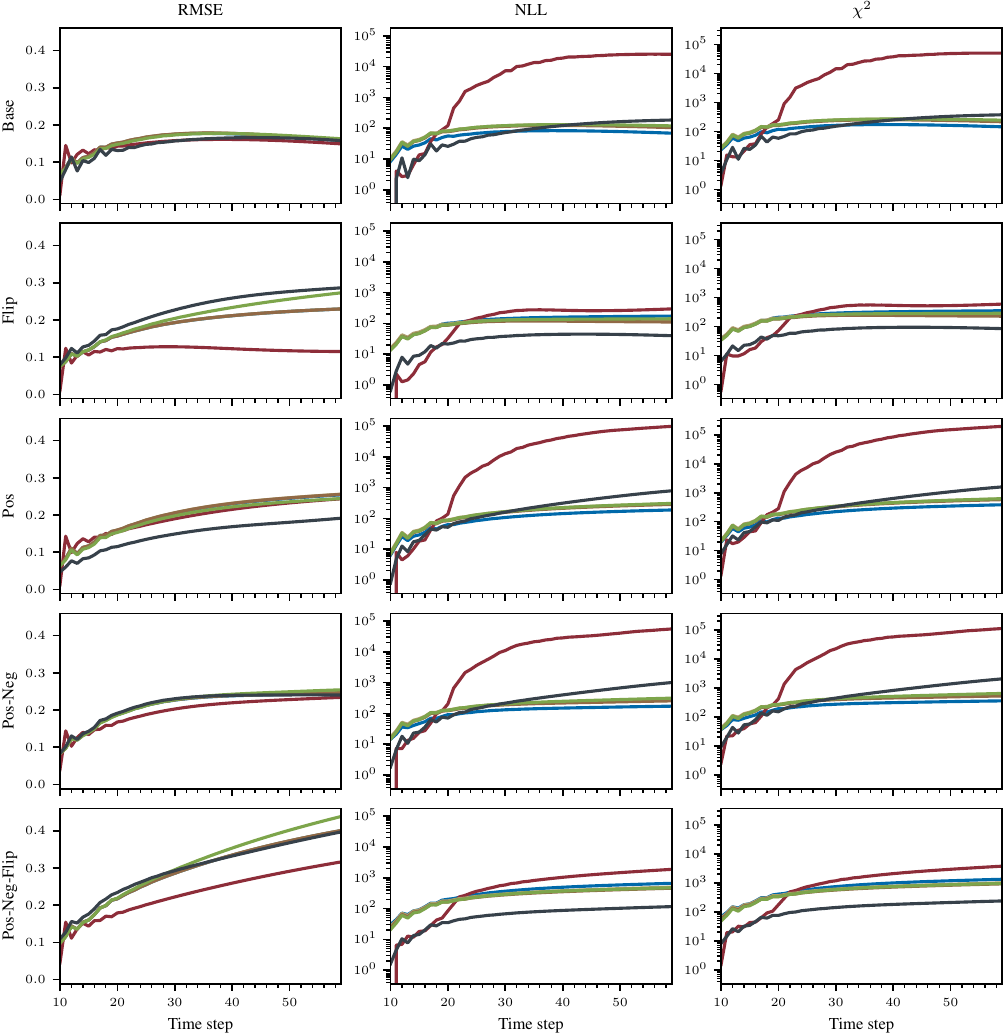}
\caption{%
        Averaged performance of different UQ methods on an autoregressive rollout of the FNO on 50 trajectories from the Base, Flip, Pos, Pos-Neg, and Pos-Neg-Flip datasets.
        We compare 
        input perturbations \figlegendline{TUblue},
        deep ensembles \figlegendline{TUred},
        Sample-Iso \figlegendline{TUgold},
        \nola-Iso \figlegendline{TUbrown},
        Sample-LA \figlegendline{TUgreen},
        \nola-LA  \figlegendline{TUdark}.
    }
\end{figure}

\end{document}

